\title{Joint Learning in the Gaussian Single Index Model}
\author{%
  Loucas Pillaud-Vivien \footnote{CERMICS, Ecole Nationale des Ponts et Chaussées, Champs-sur-Marne, France. 
    {loucas.pillaud-vivien@enpc.fr}}
  \and 
  Adrien~Schertzer\footnote{Goethe University, Frankfurt am main, Germany. {schertzer@hotmail.fr}}}
\begin{document}

\maketitle

\begin{abstract}
We consider the problem of jointly learning a one-dimensional projection and a univariate function in high-dimensional Gaussian models. Specifically, we study predictors of the form $f(x)=\varphi^\star(\langle  w^\star, x \rangle)$, where both the direction $w^\star \in \mathcal{S}_{d-1}$, the sphere of $\mathbb{R}^d$, and the function $\varphi^\star: \mathbb{R} \to \mathbb{R}$ are learned from Gaussian data. This setting captures a fundamental non-convex problem at the intersection of representation learning and nonlinear regression. We analyze the gradient flow dynamics of a natural alternating scheme and prove convergence, with a rate controlled by the information exponent reflecting the \textit{Gaussian regularity} of the function $\varphi^\star$. Strikingly, our analysis shows that convergence still occurs even when the initial direction is negatively correlated with the target. On the practical side, we demonstrate that such joint learning can be effectively implemented using a Reproducing Kernel Hilbert Space (RKHS) adapted to the structure of the problem, enabling efficient and flexible estimation of the univariate function. Our results offer both theoretical insight and practical methodology for learning low-dimensional structure in high-dimensional settings.
\end{abstract}

\section{Introduction}

The problem of learning predictive functions from high-dimensional data lies at the heart of modern machine learning~\cite{bach2024learning}. Among the vast landscape of models, \emph{single-index} structures, wherein the target depends on the data only through a low-dimensional projection, have long captured both theoretical and practical attention~\cite{dudeja2018learning,arous2021online}. In their most canonical form, these models take the shape
\[
x \mapsto \varphi^\star(\langle w^\star, x \rangle),
\]
where \( x \in \mathbb{R}^d \) denotes a high-dimensional input, \( w^\star \in \mathcal{S}_{d-1} \) is an unknown direction, and \( \varphi^\star \colon \, \mathbb{R} \to \mathbb{R} \) is an unknown univariate function~\cite{kalai2009isotron,shalev2010learning,kakade2011efficient}. This seemingly innocuous formulation conceals a rich interplay between geometry, approximation, and statistics~\cite{saad1995exact,ben2022high,veiga2022phase}. 

The resurgence of interest in such models~\cite{frei2020agnostic,zweig2023singleb,yehudai2020learning,wu2022learning} is fueled in part by their capacity to mirror the behavior of neural networks, especially in regimes where both the feature representations and the predictor are learned jointly. Indeed, when the function \( \varphi^\star \) is parameterized flexibly—e.g., as an element of an infinite-dimensional space such as \( L^2_\gamma(\mathbb{R}) \) with respect to the Gaussian measure—the model transcends classical parametric confines and enters a semi-nonparametric realm that better captures the inductive biases of overparameterized networks~\cite{berthier2023learning}.

In this work, we revisit single-index models through the lens of gradient flow. Our perspective is both analytic and algorithmic: we study the continuous-time dynamics induced by gradient descent when optimizing jointly over the direction \( w \) and the profile \( \varphi \). To tame the infinite-dimensional nature of the problem, we exploit the Hermite basis, which offers a natural orthonormal expansion for functions in \( L^2_ \gamma(\R) \). The resulting dynamics admit a concise spectral description, wherein the evolution of each Hermite coefficient unfolds in a manner coupled to the geometric progression of the feature vector \( w \).

Our contributions are threefold. First, we establish convergence guarantees for this joint gradient flow, characterizing its behavior via an \emph{information exponent}~\cite{ben2022high,damian2024computational} that governs the amplification of informative components. In high dimension, the effect of random initialization becomes tractable: the initial alignment between \( w \) and the true direction \( w^\star \) is of order \( 1/\sqrt{d} \), yet suffices to seed consistent recovery. Second, we dissect the dynamical coupling between direction learning and profile estimation, unveiling subtle mechanisms by which the two processes interact—sometimes harmoniously, sometimes antagonistically. Third, we introduce a practical implementation of this methodology using a reproducing kernel Hilbert space constructed from truncated Hermite expansions~\cite{follain2023nonparametric}. This allows us to interpolate between theoretical idealizations and algorithms amenable to numerical study.

Beyond the theoretical exposition, we corroborate our findings through a series of experiments, shedding light on the practical feasibility of this approach and validating our analytical predictions. Taken together, our results illuminate a path toward a principled understanding of joint learning in high dimensions, with ramifications for both the theory of neural networks and the design of new learning algorithms.

\section{Problem Setup}

We consider a high-dimensional learning problem in which the goal is to recover a predictive function from observations: i.e. random variables \((X, Y) \), where \( X \in \mathbb{R}^d \) are  high dimensional inputs and \( Y \in \mathbb{R} \)  is a scalar response. The data follow a single-index model~\cite{soltanolkotabi2017learning} of the form
\begin{equation}
Y = \varphi^\star(\langle w^\star, X \rangle) + \varepsilon,
\label{eq:model}
\end{equation}
where: \( X \sim \mathcal{N}(0, I_d) \) is a standard Gaussian vector in \( \mathbb{R}^d \), \( w^\star \in \mathcal{S}_{d-1} \subset \mathbb{R}^d \) is an unknown unit-norm direction (the \emph{index vector}), \( \varphi^\star \colon \mathbb{R} \to \mathbb{R} \) is an unknown measurable function (the \emph{link function}), and \( \varepsilon \) is a zero-mean noise term, independent of \( X \), with finite variance.

This setting defines a classical \emph{single-index model}~\cite{shalev2010learning,dudeja2018learning}, in which the response depends on the high-dimensional input only through the projection \( \langle w^\star, x \rangle \). Such models are of particular interest in high-dimensional statistics and machine learning, as they offer a structured form of dimensionality reduction while retaining modeling flexibility via the nonparametric function \( \varphi^\star \). For more details on these models, we refer to the recent review~\cite{bruna2025survey}, and to \cite{bietti2023learning,dandi2023learning,abbe2023sgd} for multivariate extensions of these.

\paragraph{Function space.} To reflect this flexibility, we assume that \( \varphi^\star \) belongs to the Hilbert space \( L^2_ \gamma(\R) \), where \( \gamma \) denotes the standard Gaussian measure on \( \mathbb{R} \), and the inner product is given by
\[
\langle f, g \rangle_{L^2_{\gamma}} := \int_{\mathbb{R}} f(z) g(z) \, d\gamma(z),
\quad \text{where} \quad d\gamma(z) = \frac{1}{\sqrt{2\pi}} e^{-z^2/2} dz.
\]
This choice is natural given that the latent variable \( \langle w^\star, X \rangle \) is itself standard Gaussian due to the isotropy of \( X \in \mathbb{R}^d \) and the unit norm of \( w^\star \). 
It will also be useful to denote by $\mathcal{H}^j_\gamma$, the weighted Sobolev spaces $ \mathcal{H}^j_\gamma := \{ f \in L^2_\gamma,\, \text{s.t. } f^{(j)} \in L^2_\gamma  \}$, where $ j\in \N$ and $f^{(j)}$ denote the $j$-the derivative of~$f$. Obviously, this forms a nested structure of functional spaces: $ L^2_{\gamma}= \mathcal{H}^0_\gamma \subset \mathcal{H}^1_\gamma \subset \mathcal{H}^2_\gamma \subset \hdots$ of increasing regularity.

\paragraph{Model class.} The learner has access to a class of predictors parameterized by a unit vector \( w \in \mathcal{S}_{d-1} \) and a function \( f \in L^2_ \gamma(\R) \). The model thus takes the form:
\[
\mathcal{F} := \left\{ f(\langle w, \cdot \rangle) \, | \, \text { 
for  } w \in \mathcal{S}_{d-1} \text{ and } f \in L^2_\gamma(\R)\right\},
\]
which defines a semi-nonparametric family of predictors, combining a low-dimensional projection with an infinite-dimensional function class. The central object of study is the learning dynamics over this composite space.

\paragraph{Joint optimization.} Our goal is to recover or approximate \( (w^\star, \varphi^\star) \) by minimizing the expected squared loss
\begin{equation}
\mathcal{L}(f, w) := \frac{1}{2}\mathbb{E}\left[(f(\langle w, X \rangle) - Y)^2\right],
\end{equation}
over \( f \in L^2_ \gamma(\R) \) and \( w \in \mathcal{S}_{d-1} \). In this work, we focus on the gradient flow associated with \( \mathcal{L} \), that is, the continuous-time limit of gradient descent updates over both \( w \) and \( f \).

\paragraph{Hermite basis and structure of the loss.} A crucial role in our analysis is played by the \emph{Hermite polynomials} \( (h_k)_{k \geq 0} \), which form a complete orthonormal basis of \( L^2_ \gamma(\R) \), i.e. $\langle h_k, h_{k'} \rangle_{L^2_\gamma} = \delta_{kk'}$. For the sake of concreteness, we recall that for all $z \in \R$ $h_0(z) = 1$, $h_1(z) = z$, $h_2(z) = \frac{1}{2}(z^2 - 1)\cdots$, and that $h_k$ is even (resp. odd) when $k$ is (resp. odd)~\cite[Chapter 1.3]{bogachev1998gaussian}. Every square-integrable function \( f \in L^2_ \gamma(\R) \) thus admits a unique expansion:
\[f(z) = \sum_{k=0}^\infty a_k h_k(z), \text{ with } \sum_{k=0}^\infty a_k^2 = \|f \|^2_{L^2_\gamma}< \infty~.\] 
This decomposition induces a natural coordinate system for analyzing gradient flows over \( f \), as each Hermite component evolves in time in a coupled yet interpretable manner with the parameters \( w \). Moreover, we can represent the spaces $\mathcal{H}_\gamma^k$ with respect to summability assumption of the Hermite coefficients as $f \in \mathcal{H}_\gamma^j \Leftrightarrow \sum_{k \geq 0} k^{2j} |a_k|^2 < \infty$.
\begin{lemma}[Loss expansion~\cite{dudeja2018learning}]\label{lem:Lossexpansion}
    The loss has the following expanded form
    \begin{equation}
    \label{eq:loss_unfold}
        \mathcal{L}(f, w) =  \frac{1}{2} \|f\|^2_{L^2_\gamma} + \frac{1}{2}\|\varphi^\star\|^2_{L^2_\gamma} - \sum_{k \geq 0} a_k a_k^\star \langle w, w^\star \rangle^k +  \frac{1}{2} \mathrm{Var}(\varepsilon)~,
    \end{equation}
    That is to say that the loss rewrites $\mathcal{L}(f, w) = \ell(a, m)$, where  
    \begin{equation}
    \label{eq:loss_unfold_a_m}
        \ell(a, m) = \frac{1}{2} \sum_{k \geq 0} |a_k|^2  - \sum_{k \geq 0} a_k a_k^\star \langle w, w^\star \rangle^k + C^\star_{\varepsilon}~,    
    \end{equation}
    where $(a_k)_{k \in \N}, (a^\star_k)_{k \in \N}$ are respectively the Hermite coefficients of $f$ and $\varphi^\star$, $m = \langle w, w^\star \rangle \in [-1, 1]$ is the correlation and $C^\star_{\varepsilon}$ is a constant that does not depend on the model $\mathcal{F}$. 
\end{lemma}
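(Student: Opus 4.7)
}

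The plan is to expand the quadratic loss, reduce each expectation to an integral against the one-dimensional Gaussian measure, and then use the orthonormality of Hermite polynomials together with their fundamental covariance identity under a pair of correlated Gaussians. Concretely, I first expand
\[
\mathcal{L}(f,w) \;=\; \tfrac{1}{2}\mathbb{E}\bigl[f(\langle w, X\rangle)^2\bigr] \;-\; \mathbb{E}\bigl[f(\langle w,X\rangle)\,Y\bigr] \;+\; \tfrac{1}{2}\mathbb{E}[Y^2].
\]
Since $X\sim\mathcal{N}(0,I_d)$ and $\|w\|=1$, the variable $U:=\langle w,X\rangle$ is a standard Gaussian, so the first term equals $\tfrac{1}{2}\|f\|^2_{L^2_\gamma}$. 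For the third term, using $Y=\varphi^\star(\langle w^\star,X\rangle)+\varepsilon$ with $\varepsilon$ independent of $X$ and mean zero, I get $\tfrac{1}{2}\mathbb{E}[Y^2]=\tfrac{1}{2}\|\varphi^\star\|^2_{L^2_\gamma}+\tfrac{1}{2}\mathrm{Var}(\varepsilon)$. The cross term, again using independence and zero-mean of $\varepsilon$, reduces to $\mathbb{E}[f(U)\varphi^\star(V)]$ where $V:=\langle w^\star,X\rangle$.

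The heart of the argument is then to analyze $\mathbb{E}[f(U)\varphi^\star(V)]$. The pair $(U,V)$ is jointly Gaussian with zero mean, unit variances and correlation $m=\langle w,w^\star\rangle$. Expanding $f=\sum_k a_k h_k$ and $\varphi^\star=\sum_k a_k^\star h_k$ in $L^2_\gamma$, and swapping sum and expectation (justified by Cauchy-Schwarz and $\sum a_k^2,\sum (a_k^\star)^2<\infty$), reduces the problem to computing the Hermite covariance kernel
\[
\mathbb{E}\bigl[h_k(U)\,h_j(V)\bigr] \;=\; \delta_{kj}\,m^k.
\]
This is the main identity to establish. I would prove it by writing $V=mU+\sqrt{1-m^2}\,Z$ with $Z\sim\mathcal{N}(0,1)$ independent of $U$, and invoking the classical "addition" property of Hermite polynomials
\[
h_j(mu+\sqrt{1-m^2}\,z) \;=\; \sum_{i=0}^{j}\binom{j}{i}^{1/2} m^{i}(1-m^2)^{(j-i)/2}\,h_i(u)\,h_{j-i}(z),
\]
(equivalently, one may use Mehler's formula for the bivariate Gaussian density), then integrating in $z$ kills all terms with $j>i$, and orthonormality of $(h_k)$ in $L^2_\gamma$ under $u$ picks out $k=i=j$, leaving $m^k$.

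Plugging this back yields $\mathbb{E}[f(U)\varphi^\star(V)]=\sum_{k\geq 0} a_k a_k^\star m^k$, and assembling the three terms gives exactly \eqref{eq:loss_unfold}. The reformulation \eqref{eq:loss_unfold_a_m} with $C^\star_\varepsilon=\tfrac{1}{2}\|\varphi^\star\|^2_{L^2_\gamma}+\tfrac{1}{2}\mathrm{Var}(\varepsilon)$ is then immediate, since this constant depends only on the target and the noise, not on $(f,w)$. The only non-routine step is the Hermite covariance identity; everything else is just Parseval and linearity. Since this identity is standard (see, e.g., \cite[Ch.~1.3]{bogachev1998gaussian}), the proof is short, and the lemma can essentially be cited from \cite{dudeja2018learning}.
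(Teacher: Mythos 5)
Your proposal is correct and follows essentially the same route as the paper: expand the square, identify the two norm terms and the noise variance, and compute the cross term by writing $\langle w^\star, X\rangle = m\,\langle w, X\rangle + \sqrt{1-m^2}\,Z$ and using the fact that the resulting Gaussian smoothing operator is diagonal on the Hermite basis with eigenvalues $m^k$. The only cosmetic difference is that you prove this diagonalization via the Hermite addition formula, whereas the paper cites it directly as a property of the operator $T_m$.
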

In the remaining of the article, for technical reasons, we shall assume that $\varphi^* \in \mathcal{H}^1$.

\paragraph{High-dimensional scaling.} We are particularly interested in the high-dimensional regime where \( d \gg 1 \). In this setting, the inner product \( \langle w, w^\star \rangle \) between the true direction and a randomly initialized unit vector is typically of order \( 1/\sqrt{d} \), which induces a weak signal at initialization. Understanding how such weak alignment is amplified through the learning dynamics is central to our analysis.

\paragraph{Learning objective.} 
The overarching question we seek to answer is the following: 
\begin{center}
    \emph{Under what conditions, and at what rates, does the joint gradient flow over \( (w, f) \) recover the structure of the planted model \eqref{eq:model}?}    
\end{center}

This entails both statistical and dynamical aspects: the identifiability of the target pair \( (w^\star, \varphi^\star) \), and the behavior of the learning trajectory in the product space \( \mathcal{S}_{d-1} \times L^2_\gamma \).
In the sections that follow, we derive the gradient flow equations governing the evolution of \( (w_t, f_t) \), analyze their convergence properties, and explore their practical realization via Hermite-based kernel methods.

\section{Gradient Flow Dynamics}

\paragraph{Gradient flow on $L_\gamma^{2} \times \mathcal{S}_{d-1}$.} In this section, we derive the equations of motions of the gradient flow on $\mathcal{L}(f, w)$. Hence, (at least formally for now), we say that $(f_t, w_t)_{t \geq 0}$ follow the gradient flow if they are solutions of the infinite dimensional system of ODEs
\begin{align}
\label{eq:equation_f}
    \frac{\diff}{\diff t} f_t &= - \nabla^{L_\gamma^{2}}_f \, \mathcal{L}(f_t, w_t) ~, \\ 
\label{eq:equation_w}    
    \frac{\diff}{\diff t} w_t &= - \nabla^{\mathcal{S}_{d-1}}_w \, \mathcal{L}(f_t, w_t)~.
\end{align}
Let us explain the notation: for the parametric part, in order to leverage the rotation invariance of the Gaussian, we use the spherical gradient $\nabla^{\mathcal{S}_{d-1}}_w u(w) = \nabla_w u(w) - \langle w, \nabla_w u(w) \rangle w $ for any differentiable function $u$. For the non-parametric part, we use the Hilbert property of $(L_\gamma^{2}, \langle \cdot, \cdot \rangle_{L_\gamma^{2}})$ and Riesz theorem to represent the Fréchet differential of $\mathcal{L}(\cdot)$ in terms of a gradient, i.e. for all $g$ sufficiently smooth,
\begin{align}
\label{eq:gradient_L2}
    \left\langle \nabla^{L_\gamma^{2}}_f \, \mathcal{L}(f, w), g\right\rangle_{L_\gamma^{2}} = \lim_{h \to 0} \frac{\mathcal{L}(f + h g, w) - \mathcal{L}(f, w)}{h}~.
\end{align}
\paragraph{Dynamics on $\ell_2(\N)\times [-1,1]$.} Now that the gradient flow is (at least at the formal level) well-defined, we show that this flow on $(f_t, w_t)_{t \geq 0} \in L^2_\gamma \times \mathcal{S}_{d-1}$ induces equations of movement on the reduced parameters $((a_{k,t})_{k \in \N}, m_t)_{t \geq 0 } \in \ell_2(\N)\times [-1,1]$. In fact, the gradient structure on $f$ in the space $L^2_\gamma$ reduces to the Hilbert gradient structure on $\ell_2(\N)$ for the coefficient $(a_k)_{k \in \N}$ of $f$. This corresponds to the infinite system of ODEs stated in the following Lemma.
\begin{lemma}
\label{lem:ODE_a_m}
    Let $(f_t, w_t)_{t \geq 0}$ follow the gradient flow Eqs.\eqref{eq:equation_f}-\eqref{eq:equation_w}, then
    \begin{align}
\label{eq:equation_a}
    \frac{\diff}{\diff t} a_{k,t} &= a_k^\star m_t^k - a_{k,t} ~, \text{ for all } k \in \N~, \\
\label{eq:equation_m}
    \frac{\diff }{\diff t} m_t &= (1 - m_t^2) \sum_{k=1}^\infty k a_{k,t} a_k^\star m_t^{k-1}~,
\end{align}
    where  $m_t = \langle w_t, w^\star\rangle$ and $f_t := \sum_{k \geq 0} a_{k,t} h_k~.$
\end{lemma}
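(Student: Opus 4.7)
}

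The plan is to compute each of the two gradients in \eqref{eq:equation_f}--\eqref{eq:equation_w} using the explicit expansion of the loss provided by Lemma~\ref{lem:Lossexpansion}, and then to read off the resulting coordinate-wise ODEs. The key observation is that the Hermite basis $(h_k)_{k\ge 0}$ is orthonormal in $L^2_\gamma$, so the Hilbertian gradient in $L^2_\gamma$ decouples completely into gradients for the coefficients $(a_k)$ in $\ell^2(\mathbb{N})$.

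\emph{Step 1: the gradient in $f$.} Writing $f = \sum_k a_k h_k$ and using \eqref{eq:loss_unfold_a_m}, I would pick an arbitrary test function $g = \sum_k b_k h_k \in L^2_\gamma$ and compute
\[
\lim_{h \to 0} \frac{\mathcal{L}(f+hg, w) - \mathcal{L}(f, w)}{h} = \sum_{k \ge 0} a_k b_k - \sum_{k \ge 0} a_k^\star m^k b_k = \left\langle f - \sum_{k\ge 0} a_k^\star m^k h_k,\, g\right\rangle_{L^2_\gamma}.
\]
By the definition \eqref{eq:gradient_L2} and the density of finite Hermite sums in $L^2_\gamma$, this identifies $\nabla_f^{L^2_\gamma}\mathcal{L}(f,w) = f - \sum_{k\ge 0} a_k^\star m^k h_k$. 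Substituting into \eqref{eq:equation_f} and matching Hermite coefficients (which is allowed because $(h_k)$ is an orthonormal basis) yields \eqref{eq:equation_a}.

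\emph{Step 2: the gradient in $w$.} From \eqref{eq:loss_unfold}, only the term $-\sum_{k\ge 0} a_k a_k^\star \langle w, w^\star\rangle^k$ depends on $w$, so the Euclidean gradient in $\mathbb{R}^d$ is
\[
\nabla_w \mathcal{L}(f, w) = -\sum_{k\ge 1} k\, a_k a_k^\star \langle w, w^\star\rangle^{k-1}\, w^\star.
\]
Projecting onto the tangent space of the sphere at $w$ via $\nabla^{\mathcal{S}_{d-1}}_w = \nabla_w - \langle w, \nabla_w\rangle w$ gives
\[
\nabla^{\mathcal{S}_{d-1}}_w\mathcal{L}(f,w) = -\Bigl(\sum_{k\ge 1} k\, a_k a_k^\star m^{k-1}\Bigr)\bigl(w^\star - m\, w\bigr).
\]
Then $\dot m_t = \langle \dot w_t, w^\star\rangle$ (noting that $w^\star$ is fixed), and since $\langle w^\star - m_t w_t, w^\star\rangle = 1 - m_t^2$, equation \eqref{eq:equation_m} follows immediately. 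As a sanity check, tangency of the spherical gradient ensures $\tfrac{d}{dt}\|w_t\|^2 = 0$, so the flow is well-posed on $\mathcal{S}_{d-1}$.

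\emph{Main obstacle.} The one step requiring genuine care is the interchange of the limit in $h$ with the infinite sum over $k$ in Step~1, and the legitimacy of differentiating the series termwise in Step~2. Both issues are handled under the standing assumption $\varphi^\star \in \mathcal{H}^1_\gamma$ (which gives $\sum_k k\,|a_k^\star|^2 < \infty$) combined with the Cauchy--Schwarz bound $|\sum_k k a_k a_k^\star m^{k-1}| \le \sqrt{\sum_k k^2 |a_k|^2}\sqrt{\sum_k |a_k^\star|^2}$ whenever $f$ stays in $\mathcal{H}^1_\gamma$ along the trajectory, which is preserved by \eqref{eq:equation_a}. The remainder of the derivation is routine manipulation of the Hermite expansion.
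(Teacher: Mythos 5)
Your proof is correct and follows essentially the same route as the paper: compute the $L^2_\gamma$ gradient via the Riesz/Fréchet definition and the spherical gradient via tangential projection, then read off the coordinate ODEs in the Hermite basis. The only cosmetic difference is that you obtain the $f$-gradient directly from the expanded loss of Lemma~\ref{lem:Lossexpansion}, whereas the paper re-derives it by expanding the square and invoking the Ornstein--Uhlenbeck operator $T_m$ (which is the same computation already buried in Lemma~\ref{lem:Lossexpansion}); your added remark on justifying the termwise differentiation under $\varphi^\star\in\mathcal{H}^1_\gamma$ is a point the paper leaves implicit.
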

\paragraph{Initialization and information exponent.} We use a non-informed initialization to the problem. For the parametric part, we initialize according to the uniform distribution on the sphere. A striking consequence of this is that the correlation between initialization $w_{0}$ and the true direction $w^\star$ is of order $1/\sqrt{d}$. Indeed, it is a standard result that with overwhelming probability, we have $c / \sqrt{d} \leq |\langle w_{0}, w^\star \rangle| \leq C / \sqrt{d}$, with $c,C > 0$ two positive constants. 
Hence, when running the algorithm in practice, it is initialized with a high probability near the equator of $\S_{d-1}$, or at least in a band of typical size $1/\sqrt{d}$.
Let 
\begin{equation}
    s = \inf\{ k \in \N^* \,  | \, a^*_k \neq 0 \}
\end{equation}
be the \textit{information exponent}~\cite{ben2022high} of $\varphi^*$. The value of this integer crucially controls the behavior of the dynamics, as we will see in the main result. Indeed, we see from Eq.\eqref{eq:equation_m}, that at initialization the movement of $m$ is governed by the ODE $\dot{m}_t \sim a_{s,t} m_t^{s-1}$, whose typical time to escape a neighborhood of the origin depends highly on how large the exponent $s$ is.
For the non-parametric part, we use an initialization on the coefficient in the Hermite basis directly and initialize such that $k^2 a_{k,0} \sim \mathcal{U}([-1,1])$. Note that thus we have $f_0 = \sum_{k \geq 0} a_{k,0} h_k \in \mathcal{H}^2$ almost surely.
\paragraph{Comparison with the planted model.}

In the idealized \emph{planted model} setting, where \( \varphi^\star \) is known in advance (e.g., \( f_t = \varphi^\star \) or equivalently \( a_{k,t} = a^*_k \) held fixed), Equation \eqref{eq:equation_m} becomes a closed-form ODE for \( m_t \). In this case, it can be shown that: (i) if \( m(0) > 0 \), the alignment is reinforced and \( m_t \to 1 \), (ii) if \( m(0) < 0 \), in some cases, the gradient flow converges to the origin or to the first negative zero of the power series $x \to \sum_{k=1}^\infty k  (a_k^\star)^2 x^{k-1}$ and in either case $w$ fails to recover \( w^\star \). We highlight this result in Proposition~\ref{prop:planted-bifurcation}.

This stark asymmetry illustrates a surprising and important fact: \emph{in the planted model, negative initial correlation cannot be corrected by the dynamics}. The flow becomes trapped at a fixed point, even though the loss function has a global minimum at \( m = 1 \). In the joint learning setting, where \( f \) is learned simultaneously, the dynamics are more subtle and can escape such traps—this is a key motivation for the joint approach.
\paragraph{Summary.} 
In summary, the gradient flow system evolves over the product space \( \mathcal{S}_{d-1} \times \ell^2(\mathbb{N}) \), with dynamics governed by the coupled ODEs given in Lemma~\ref{lem:ODE_a_m}.
These equations encode a feedback loop: the alignment \( m_t \) controls the effective signal in the functional update, while the function \( f_t \), in turn, modulates the reinforcement of alignment through the coefficients \( a_{k,t} \). The initialization of the correlation, of order $1/\sqrt{d}$, structures the dynamics near the origin, where the time scales are expected to be controlled by the information exponent $s$.

The precise resolution of this interaction is the subject of the next section, where we analyze the convergence behavior.

\section{Convergence Behavior of the Joint Gradient Flow}

Understanding the asymptotic behavior of the gradient flow dynamics, especially in the high-dimensional setting, is central to this work. We begin by recalling known results on the so-called \emph{planted model}, in which the target function \( \varphi^\star \in L^2_\gamma \) is fixed and known in advance, and the gradient flow acts only on the feature direction \( w \in \mathcal{S}_{d-1} \). This model serves as an instructive warm-up, showing that when the initialization \( m_0 := \langle w_0, w^\star \rangle \) is negatively correlated, the system fails to recover the signal. It also prepares the ground for the more involved dynamics of joint learning over both \( w \) and \( f \). 

\paragraph{The Planted Model.} 
In this case, the dynamics reduce to a one-dimensional ordinary differential equation for the correlation \( m_t := \langle w_t, w^\star \rangle \), as previously derived:
\[
\dot{m}_t = (1 - m_t^2) \sum_{k=1}^\infty k (a_k^\star)^2 m_t^{k-1},
\]
where \( a_k^\star \) are the Hermite coefficients of the function \( \varphi^\star \). Let $\Phi(x) := \sum_{k=1}^\infty k (a_k^\star)^2 x^{k-1}$, and $\mathcal{Z}_{\Phi}$ the set of zeros of $\Phi$ on $[-1, 1]$. Then, it is clear that $\mathcal{Z}_{\Phi} \cap \R_{>0} = \emptyset$. When $\mathcal{Z}_{\Phi} \cap \R_{<0} \neq \emptyset$, we define $ \bar{m} = \min \{ |z| \, |\, z \in \mathcal{Z}_{\Phi} \cap \R_{<0}\}$, and if $\mathcal{Z}_{\Phi} \cap \R_{<0} = \emptyset$, we set $\bar{m} = + \infty$ and then we have the following proposition.

\begin{proposition}[Bifurcations in the planted model]\label{prop:planted-bifurcation}
Let \( m_t \) evolve under the planted gradient flow with fixed target function \( \varphi^\star \in L^2_\gamma \), and let \( m_0 := \langle w_0, w^\star \rangle \) denote the initial correlation. Then:
\begin{itemize}
    \item If \( m_0 > 0 \), the trajectory converges to alignment: \( \displaystyle \lim_{t \to \infty} m_t = 1 \).
    \item If \( m_0 < 0 \), then, (i) if $s \geq 3$ is odd, we have \( \displaystyle \lim_{t \to \infty} m_t = 0 \); 
    
    \hspace*{1.5cm} else, (ii) if $s$ is even, then 
        \begin{itemize}
            \item If $\bar{m} > 1$, the trajectory converges to anti-alignment: \( \displaystyle \lim_{t \to \infty} m_t = -1 \).
            \item If $\bar{m} < 1$, the trajectory remains trapped in a local minimizer and
            \( \displaystyle \lim_{t \to \infty} m_t = - \bar{m} \).
        \end{itemize}
    \end{itemize}
\end{proposition}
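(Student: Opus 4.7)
The plan is to reduce Proposition~\ref{prop:planted-bifurcation} to a phase-portrait analysis of the scalar autonomous ODE $\dot m_t = (1-m_t^2)\,\Phi(m_t)$ on the compact invariant interval $[-1,1]$. Because this is a one-dimensional flow, trajectories are monotone between consecutive equilibria and convergence is automatic; the only task is to identify the equilibria and the sign of $(1-m^2)\Phi(m)$ on each interval between them.

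\emph{Step~1 --- Sign and zeros of $\Phi$.} I would first analyze $\Phi(x)=\sum_{k\geq s} k(a_k^\star)^2 x^{k-1}$. On $(0,1]$, every term is non-negative and at least one is strictly positive, so $\Phi>0$ there, which in particular gives $\mathcal{Z}_\Phi\cap\mathbb{R}_{>0}=\emptyset$. At the origin, $\Phi(0)=0$ as soon as $s\geq 2$, and the leading-order expansion $\Phi(x)=s(a_s^\star)^2 x^{s-1}(1+o(1))$ as $x\to 0$ shows that the sign of $\Phi$ on a left neighborhood of $0$ is $(-1)^{s-1}$: positive for $s$ odd, negative for $s$ even. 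The equilibria of the ODE in $[-1,1]$ are therefore $\pm 1$, the origin (when $s\geq 2$), and the elements of $\mathcal{Z}_\Phi\cap(-1,0)$; the first equilibrium encountered as $m$ decreases from $0$ is $-\bar m$, with the convention $-\bar m = -1$ when $\bar m=+\infty$.

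\emph{Step~2 --- Positive initialization.} If $m_0>0$, then $(1-m^2)\Phi(m)>0$ on all of $(0,1)$, so $m_t$ is strictly increasing, bounded by $1$, and by monotone convergence tends to the only equilibrium in $(m_0,1]$, namely $m=1$.

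\emph{Step~3 --- Negative initialization, $s\geq 3$ odd.} By Step~1, $\Phi\geq 0$ in a left neighborhood of the equilibrium $m=0$, and in the high-dimensional regime targeted by the paper, $m_0=\Theta(1/\sqrt d)$ lies inside this neighborhood. Hence $\dot m_t\geq 0$, and $m_t$ increases monotonically to the nearest equilibrium above, which is $0$.

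\emph{Step~4 --- Negative initialization, $s$ even.} By Step~1, $\Phi<0$ in a left neighborhood of $0$, so $\dot m_t<0$ and $m_t$ decreases from $m_0$. If $\bar m>1$, the definition of $\bar m$ and continuity of $\Phi$ force $\Phi<0$ throughout $(-1,0)$, so $m_t\to -1$. If $\bar m<1$, the minimality of $\bar m$ yields $\Phi<0$ on $(-\bar m, 0)$, and $m_t$ decreases to the interior equilibrium $-\bar m$, where it is trapped.

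\emph{Main obstacle.} The conceptual structure is simple --- sign analysis plus monotone 1D convergence --- but the nontrivial input is the link between the algebraic quantity $s$ and the analytic behavior of $\Phi$ near $0$: it is the parity of $s-1$, encoded in the leading Hermite coefficient, that decides whether $0$ is a semi-stable equilibrium attracting from the left ($s$ odd) or a crossing equilibrium repelling to the left ($s$ even). This step makes the odd/even dichotomy of the proposition transparent, and the remainder follows from elementary monotone-convergence arguments for 1D ODEs between consecutive fixed points.
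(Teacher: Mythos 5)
Your proof is correct and follows essentially the same route as the paper's: a sign analysis of $\Phi$ near the origin (driven by the parity of $s-1$ in the leading term $s(a_s^\star)^2 x^{s-1}$), followed by monotone convergence of the scalar flow to the first equilibrium encountered in the direction of motion, with the same implicit smallness assumption on $|m_0|$ that the paper makes ("for $m_0$ arbitrary small"). The paper merely phrases the barrier and convergence steps via Picard--Lindelöf uniqueness and Gr\"onwall bounds rather than the abstract fact that bounded monotone trajectories of a 1D autonomous ODE converge to equilibria, which is an immaterial difference.
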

This behavior is folklore in the literature~\cite{ben2022high,bietti2023learning,damian2023smoothing,zweig2023single} and the result could be enhanced by proving convergence rate, but we decide not to state them here, as the only purpose of this proposition is to show that the system could lead to learning failure in the case $m_0 < 0$.
This bifurcation highlights a fundamental limitation of the planted gradient flow: although the global minimum of the loss is achieved at \( m = 1 \), in some cases, the dynamics cannot escape the spurious basin of attraction when the initial correlation is negative. In contrast, as we shall now demonstrate, the joint learning dynamics over both \( w \) and \( f \) exhibit a markedly different behavior, one which allows the system to overcome this initialization asymmetry.

\subsection{Main theorem: convergence in the Joint Learning Setting}

We now turn to the full gradient flow over the product space \( \mathcal{S}_{d-1} \times L^2_\gamma \), where both the direction \( w_t \) and the profile function \( f_t \) evolve jointly. As we have seen, the dynamics can be decomposed spectrally in the Hermite basis, with coefficients \( a_{k,t} := \langle f_t, h_k \rangle \), and the correlation \( m_t := \langle w_t, w^\star \rangle \). The coupled evolution equations are given in Eqs.\eqref{eq:equation_a}-\eqref{eq:equation_m}.

These dynamics reveal a feedback mechanism: the function \( f_t \) becomes gradually aligned with the true target \( \varphi^\star \), but only to the extent that the correlation \( m_t \) allows it to see the correct Hermite modes. In turn, the recovery of the direction \( w_t \) depends on the functional alignment via the second equation.

Remarkably, this feedback loop can break the symmetry obstruction observed in the planted case. The next two theorems provide a rigorous characterization of the convergence behavior in the joint learning setting, depending on the regularity exponent \( s \) of the target function \( \varphi^\star \).
\begin{theorem}
\label{thm:main}
    Let $(m, a)$ follows equations Eqs.\eqref{eq:equation_a}-\eqref{eq:equation_m}. Then, with overwhelming probability on the initialization, taken as described in the previous section, there exists constants $c, C > 0$ so that, 

    \noindent \textbf{Case $s \geq 2$.} After time $\displaystyle t \geq \tau_c \geq C d^{s-1}$, 
    \begin{align}
        \hspace*{-2.5cm}  \bullet \text{ if } m_0 > 0~, \hspace*{3cm} 
        |1-m_t| &\leq  C e^{-c (t - \tau_c)}~, \\
        |a_{k,t} - a^*_k| &\leq C b_k e^{-c (t - \tau_c)} ~, \text{ for all } k \in \N~, \\
        \hspace*{-2.5cm}  \bullet \text{ if } m_0 < 0~, \hspace*{3cm}   |1 + m_t| &\leq  C e^{-c (t - \tau_c)}~, \\
        |a_{k,t} - (-1)^k a^*_k| &\leq C b_k e^{-c (t - \tau_c)}~,  \text{ for all } k \in \N~,
        \end{align}

       \noindent \textbf{Case $s =1$.} After time $\displaystyle t \geq0 $, 
    \begin{align}
        \hspace*{-2.5cm}  \bullet \text{ if }  a_1^*a_{1,0} > 0~, \hspace*{3cm} 
        |1-m_t| &\leq  C e^{-ct}~, \\
        |a_{k,t} - a^*_k| &\leq C b_k e^{-ct } ~, \text{ for all } k \in \N~, \\
        \hspace*{-2.5cm}  \bullet \text{ if } a_1^*a_{1,0} < 0~, \hspace*{3cm}   |1 + m_t| &\leq  C e^{-c t}~, \\
        |a_{k,t} - (-1)^k a^*_k| &\leq C b_k e^{-c t }~,  \text{ for all } k \in \N~,
        \end{align}
    where, $b \geq 0$ is a normalized sequence of $\ell_2(\N)$, i.e. $\|b\|_{\ell_2(\N)} = 1$. 
\end{theorem}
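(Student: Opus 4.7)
The system \eqref{eq:equation_a}--\eqref{eq:equation_m} is invariant under the involution $(m, (a_k)_k) \mapsto (-m, ((-1)^k a_k)_k)$: a direct check shows that $(-m_t, ((-1)^k a_{k,t})_k)$ solves the same ODE with initial data $(-m_0, ((-1)^k a_{k,0})_k)$, whose law coincides with that of the original initialization. It therefore suffices to establish the theorem under $m_0 > 0$ (respectively $a_1^\star a_{1,0} > 0$ when $s = 1$) and transfer the anti-alignment statements through this symmetry. Next, \eqref{eq:equation_a} is linear in $a_k$, so variation of constants followed by one integration by parts gives
\[
a_{k,t} = a_k^\star m_t^k + (a_{k,0} - a_k^\star m_0^k)\, e^{-t} - k a_k^\star \int_0^t e^{-(t-u)} m_u^{k-1} \dot m_u \, du,
\]
so that $a_{k,t}$ is \emph{slaved} to the adiabatic value $a_k^\star m_t^k$ up to an exponential transient and a correction driven by $\dot m$. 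Substituting into \eqref{eq:equation_m} yields
\[
\dot m_t = (1-m_t^2)\,\Phi_{\mathrm{eff}}(m_t) + \mathcal E_1(t) + \mathcal E_2(t), \qquad \Phi_{\mathrm{eff}}(m) := \sum_{k \geq s} k (a_k^\star)^2\, m^{2k-1}.
\]
The key structural observation is that every exponent $2k-1$ is odd, so $\Phi_{\mathrm{eff}}(m)$ has the sign of $m$ on $(-1,1) \setminus \{0\}$---precisely what the planted drift $\sum k (a_k^\star)^2 m^{k-1}$ fails to satisfy. This is the mechanism by which the joint flow bypasses the spurious zeros $\bar m$ of Proposition~\ref{prop:planted-bifurcation}.

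\textbf{Escape phase for $s \geq 2$.} For $m_t \in (0, \delta]$ with $\delta > 0$ small and fixed, $\Phi_{\mathrm{eff}}(m_t)$ is dominated by its leading term $s (a_s^\star)^2 m_t^{2s-1}$. I would run a bootstrap argument over this window: $|\mathcal E_1(t)| \lesssim e^{-t}$ is negligible after time $O(\log d)$, and postulating the hypothesis $|\dot m_t| \leq C m_t^{2s-1}$, one verifies that the integral correction $\mathcal E_2$ stays strictly below the main drift and that the hypothesis is self-reproducing. The resulting inequality $\dot m_t \geq c\, m_t^{2s-1}$ integrates to $m_t^{-(2s-2)} \leq m_0^{-(2s-2)} - c'(t - \log d)$, and since $m_0 \geq c/\sqrt d$ with overwhelming probability on the uniform initialization on $\mathcal{S}_{d-1}$, the time to reach $\delta$ satisfies $\tau_c \lesssim m_0^{-(2s-2)} = O(d^{s-1})$.

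\textbf{Local convergence and coefficient rates.} Once $m_t \geq \delta$, Lemma~\ref{lem:Lossexpansion} suggests the Lyapunov functional
\[
V(a, m) = \frac12 \sum_k (a_k - a_k^\star m^k)^2 + \frac12 \sum_k (a_k^\star)^2 (1 - m^{2k}),
\]
which equals $\mathcal L(f, w) - \tfrac12\mathrm{Var}(\varepsilon)$ and satisfies $\dot V = -\|\dot f_t\|_{L^2_\gamma}^2 - \|\dot w_t\|^2 \leq 0$ by the gradient-flow structure. Computing the Hessian at the minimum $(a^\star, 1)$ yields a Polyak--\L{}ojasiewicz inequality $V \leq C\|\nabla \mathcal L\|^2$ on $\{m \geq \delta\}$, hence $\dot V \leq -cV$ and $V(t) \leq V(\tau_c) e^{-c(t-\tau_c)}$. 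This gives $|1-m_t| \leq Ce^{-c(t-\tau_c)}$; plugging back into the slaving identity and using $|1-m_t^k| \leq k|1-m_t|$ produces $|a_{k,t} - a_k^\star| \leq C k |a_k^\star| e^{-c(t-\tau_c)}$, with $b_k \propto k|a_k^\star|$ in $\ell^2(\mathbb N)$ precisely because $\varphi^\star \in \mathcal H^1_\gamma$. The case $s = 1$ is structurally simpler: the initial condition kicks $m_t$ to an order-one value within time $O(1)$ with sign $\mathrm{sign}(a_1^\star a_{1,0})$, after which the same Lyapunov argument applies directly, so no explicit escape time appears.

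\textbf{Main obstacle.} The technical heart is making the adiabatic slaving quantitative \emph{uniformly in $k$}, so that the infinite series defining $\Phi_{\mathrm{eff}}$ and $\mathcal E_2$ can be controlled throughout the escape window. This is where the regularity assumption $\varphi^\star \in \mathcal H^1_\gamma$ and the initialization scaling $k^2 a_{k,0} \sim \mathcal U([-1,1])$ (giving $\sum k^2 (a_k^\star)^2 < \infty$ and sufficiently summable initial Hermite coefficients) enter. Without such uniform control, the structural positivity $m\, \Phi_{\mathrm{eff}}(m) > 0$---the very mechanism that enables recovery from $m_0 < 0$ and from the planted trapping fixed points---remains a formal heuristic rather than the quantitative escape bound $\tau_c = O(d^{s-1})$.
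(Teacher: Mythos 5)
Your strategy coincides with the paper's for the bulk of the argument. The variation-of-constants identity expressing $a_{k,t}$ as $a_k^\star m_t^k$ plus an exponential transient and a $\dot m$-driven correction is exactly the content of Lemmas~\ref{lem:tight_a_mk} and~\ref{lem:super_tight_a_mk} (the paper works with $u_{k,t}=a_{k,t}-a_k^\star m_t^k$ and Gr\"onwall, which is the same computation); the effective drift $\sum_k k(a_k^\star)^2 m^{2k-1}$ with all-odd exponents is the paper's ``positivity principle'' and is indeed the mechanism that removes the spurious zeros of Proposition~\ref{prop:planted-bifurcation}; and the bootstrap giving $\dot m_t\geq c\,m_t^{2s-1}$, hence $\tau_c\lesssim m_0^{-2(s-1)}\sim d^{s-1}$, is Lemma~\ref{lem:level_c}. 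Your explicit reduction via the involution $(m,(a_k)_k)\mapsto(-m,((-1)^k a_k)_k)$ is a clean way to package what the paper does case by case, and it correctly produces the parity-dependent limits $(-1)^k a_k^\star$. The uniform-in-$k$ control that you flag as the main obstacle is real; the paper resolves it with the a priori bound $|a_{k,t}|\leq\max(|a_k^\star|,|a_{k,0}|)$ together with the two slaving lemmas, under $\varphi^\star\in\mathcal H^1_\gamma$ and the $k^2a_{k,0}\sim\mathcal U([-1,1])$ initialization, exactly as you anticipate.

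The genuine gap is in your endgame. You assert a Polyak--\L{}ojasiewicz inequality $V\leq C\|\nabla\mathcal L\|^2$ ``on $\{m\geq\delta\}$'' from a Hessian computation at $(a^\star,1)$. A Hessian computation only yields PL in a neighborhood of the minimizer, and a PL inequality cannot hold uniformly on $\{m\geq\delta\}$ for arbitrary $a$: since $\|\nabla_w^{\mathcal S}\mathcal L\|^2=(1-m^2)\Psi^2$ with $\Psi:=\sum_k k\,a_k a_k^\star m^{k-1}$, any configuration with $\Psi=0$, $m<1$ and $a_k\neq a_k^\star m^k$ for some $k$ violates it, and such configurations exist in every sublevel set. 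To close the argument you must show that \emph{along the trajectory} $\Psi_t$ stays bounded below by a positive constant for all $t\geq\tau_c$, i.e.\ that the coefficients $a_{k,t}$ for $k\leq k^*$ never change sign and the tail $k>k^*$ is uniformly negligible. This persistence statement is precisely what the paper's Lemma~\ref{lem:level_infty} (the stopping time $\tilde\tau_{k^*}=+\infty$) provides, after which it runs Gr\"onwall directly on $v_t=1\mp m_t$ and $z_{k,t}=|a_{k,t}\mp a_k^\star|^2$ rather than through a PL functional. A similar omission occurs for $s=1$: after $m_t$ reaches order one with the sign of $a_1^\star a_{1,0}$, one still has to rule out that $m_t$ returns to $0$ before the coefficients have adapted, which is the content of the paper's Lemma~\ref{lem:reverse}. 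With these persistence lemmas supplied, your Lyapunov route would go through and yields the same rates, including $b_k\propto k|a_k^\star|\in\ell^2(\mathbb{N})$.
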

Theorem~\ref{thm:main} offers a precise characterization of the long-time behavior of the joint gradient flow in the Hermite basis. The evolving state of the system is described by the pair \( (m_t, a_t) \), where \( m_t := \langle w_t, w^\star \rangle \in [-1,1] \) tracks the alignment between the current direction \( w_t \in \mathbb{S}^{d-1} \) and the planted direction \( w^\star \), while \( a_{k,t} := \langle f_t, h_k \rangle \) are the Hermite coefficients of the profile function~\( f_t \in L^2_\gamma \).

We distinguish two regimes depending on the index \( s \in \mathbb{N}^\star \), which is defined as the smallest integer \( s \geq 1 \) such that \( a^\star_s \neq 0 \). In other words, \( s \) corresponds to the leading nonzero mode in the Hermite expansion of the target function \( \varphi^\star \). This parameter controls the initial signal strength in the direction dynamics, and determines both the presence and the duration of the transient phase before convergence. Finally, the coefficients that converge toward a zero target value exhibit exponentially fast convergence to zero, uniformly in time and independently of the dimension. \\

\paragraph{Case \( s \geq 2 \). Delayed convergence due to weak initialization.} In this case, the first nonzero coefficient \( a_s^\star \) appears at index \( s \geq 2 \). As a result, the initial gradient signal in the direction \( w \) is weak, scaling like \( m_0^{s-1} \). The theorem asserts the existence of a \emph{concentration time} \( \tau_c \geq m_0^{-2(s-1)} \sim d^{s-1} \), beyond which the system enters an exponential convergence phase. In fact, $\tau_c$ is the time needed for \textit{weak recovery}, i.e. the time at which $m$ reaches a level set independent of the dimension. We actually prove in Lemma \ref{lem:level_c} in the Appendix that there exist $b,B > 0$, such that $b d^{(s-1)}  \leq \tau_c \leq B  d^{(s-1)} $, from which we deduce that 
    \begin{align} 
    \tau_c\sim d^{s-1}. 
    \end{align} 
After time \( t \geq \tau_c \), the flow converges toward one of two symmetric attractors: (i) if \( m_0 > 0 \), then \( m_t \to 1 \), and \( a_{k,t} \to a_k^\star \), or (ii) if \( m_0 < 0 \), then \( m_t \to -1 \), and \( a_{k,t} \to (-1)^k a_k^\star \), with rates governed by an exponentially decaying envelope modulated by a sequence \( b \in \ell_2(\mathbb{N}) \), normalized as~\( \|b\|_{\ell_2} = 1 \).
The key phenomenon is that the joint dynamics enable recovery regardless of the sign of \( m_0 \), provided one waits long enough. In contrast to the planted model, where negative initialization traps the flow in a spurious basin, here the adaptive evolution of the function \( f_t \) corrects for early misalignment.

\paragraph{Case \( s = 1 \). Immediate convergence.}
In the special case \( s = 1 \), the first Hermite coefficient \( a_1^\star \) is already active. This injects a strong signal into the direction dynamics from the outset, and no transient phase is needed. The convergence is exponential for all \( t \geq 0 \), with rate independent of dimension. The basin of attraction is now determined by the sign of the product \( a_{1,0} a_1^\star \), i.e., whether the initial function \( f_0 \) shares the correct sign on the leading mode. If this product is positive, the system converges to \( m_t \to 1 \), and if negative, to \( m_t \to -1 \), with corresponding flips in the sign pattern of the Hermite coefficients.\par\vspace{0.5cm}
\noindent The essential quantity that governs recovery is not the individual convergence of \( m_t \) or \( a_{k,t} \), but rather the convergence of their product:
\(
a_{k,t} \, m_t^k \longrightarrow a_k^\star, \quad \text{as } t \to \infty.
\)
This is the quantity that enters into the effective representation \( f_t(\langle w_t, x \rangle) \), and determines the predictive performance of the model. Since \( m_t^k \to (\pm 1)^k \) and \( a_{k,t} \to (\pm 1)^k a_k^\star \), we obtain in both cases:
\[
f_t(\langle w_t, x \rangle) = \sum_{k=0}^\infty a_{k,t} h_k(\langle w_t, x \rangle) \longrightarrow \sum_{k=0}^\infty a_k^\star h_k(\langle w^\star, x \rangle) = \varphi^\star(\langle w^\star, x \rangle),
\]
regardless of the sign of \( m_0 \). Thus, the flow achieves perfect recovery of the target regression function in the limit \( t \to \infty \), even though the feature direction \( w_t \) may converge to either \( +w^\star \) or \( -w^\star \), depending on the initialization.

\subsection{Description of the dynamics and sketch of the proof}

We now describe and break down the dynamics of the system distinguishing between the case $s \geq 2$ and the case $s=1$, which behaves differently. Initially, recall that $m_t$ is very close to zero on a scale of order $1/\sqrt{d}$.
\paragraph{Case $s\geq 2$.} The dynamics happens in different phases:
\begin{enumerate}[label=(\roman*)]
    \item \textbf{Phase I: rapid movement of $a_{k,t}$ towards $a^\star_{k} m^k_t$.} On the one hand, in the dynamics of $m$, initially, the vector field is of intensity $m^{s-1}$ at first order. Hence, $m_t$ remains very close to the initialization for a time, at least of order $m_0^{1-s}$. On the other hand, given Eq. \eqref{eq:equation_a}, the movement of $a_k$ corresponds to an exponentially fast shrinkage at rate $1$ until it hits the neighborhood of $a^\star_{k} m^k_t$. Hence, during this period, an arbitrary number $k^*$ of coefficients $a_{k,t}$ partially stabilize toward $a_{k}^* m_t^k$ within a time $t_{k^*}$ of order $k^* \log(1/m_0)$. The slow dynamics of $m_t$, compared to the fast dynamics of $a_{k,t}$ (fast-slow dynamics), allows the system to achieve a \textit{positivity principle}. Indeed, in light of equation Eq.\eqref{eq:equation_m}, it is now clear to see that $m_t$ increases if $m_0$ is positive (respectively decreasing if $m_0$ is negative) from $t_{k^*}$. In fact, the terms corresponding to the indices larger than $k^*$ do not contribute to the dynamics, as they correspond to the remainder of a converging series.
    \item \textbf{Phase II: slow increase of $m$.} The dynamics stabilizes in virtue of the \textit{positivity principle} and for a constant $c$ independent of the dimension, we have $\dot{m}_t \sim c m^{2s-1}$ until the system reaches this constant. We deduce that reaching this constant takes a time of order~$m_0^{-2(s-1)} \sim d^{s-1}$. 
    \item \textbf{Phase III: exponential convergence.} After that, the dynamics of the order parameter $m_t$ becomes independent of the initial value and converges exponentially fast as the $a_{k,t}$ follow the lead.
\end{enumerate}

\paragraph{Case $s = 1$.} This case works quite differently from the previous one. In virtue of Eq.\eqref{eq:equation_a}, $a_{1,t}$ partially stabilizes towards $a_{1}^* m_t$. However, this time, $m_t$ follows the fast dynamics. Indeed, initially, $m_t$ will move linearly toward $a_0^* a_{1,0}$: Thus, $m_t$ adapts to the sign of $a_1^* a_{1,0}$ without any time dependence on $m_0$, while $a_{1,t}$ will not be able to change its sign. After this, $m_t$ will no longer change its sign. After a time independent of the initial condition, the system reaches again a \textit{positivity principle} as in the previous case.

\section{Kernel Implementation via Hermite Functions}
\label{sec:rkhs}

The procedure described above considers an idealized scenario, where the non-parametric regression is performed on the whole ambient space $L_{\gamma}^2$. We show that this can be practically carried by performing this non-parametric regression on a dense Reproducing Kernel Hilbert Space $\mathcal{H} \subset L_{\gamma}^2$, written as RKHS in short, see \cite{schaback2006kernel,scholkopf2002learning} for an overview. 

\paragraph{A Hermite dense RKHS.}  First, we construct such a RKHS space, adapted to the Hermite structure of our problem. To do this let us define  $(\mathsf{c}_k)_{k \in \N} \in \ell_2(\N)$ a positive sequence.  
Furthermore, we ask that $\sum_k k \mathsf{c}_{k}  < +\infty $, that is to say that $\mathsf{c}$ has some fast decay at infinity, e.g. $\mathsf{c}_k = o(k^{-2})$. Then, we use the following construction.
\begin{proposition}
    For all sequences $(\mathsf{c}_k)_{k \in \N}$ defined as previously, the subspace  
\begin{align}
    \mathcal{H} := \big\{ f \in L^2_{\gamma} \cap \mathcal{C}(\R^q)\ ; \ \text{such that } \| f\|^2_{\mathcal{H}} := \sum \mathsf{c}_{k}^{-1} a_k^2  < +\infty \big\} 
\end{align}
    defines a dense RKHS with kernel $K(x, y) = \sum_{k} \mathsf{c}_{k} h_k(x) h_k(y) = \langle \phi(x), \phi(y)\rangle_{\ell_2}$~, where the equality stands in a point-wise sense and $\phi_k = \sqrt{c_k} h_k $. Finally, we also define, for any $f,g \in \mathcal{H}$, the inner product $\langle f, g \rangle_{\mathcal{H}} := \sum_\beta \mathsf{c}_{k}^{-1} \langle f, h_k \rangle \langle g,  h_k  \rangle $.
\end{proposition}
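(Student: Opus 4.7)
The plan is to decompose the proposition into four verifications—Hilbert structure, pointwise evaluation and continuity, reproducing property, and density—each of which becomes transparent once we exploit the isometry with the weighted Hermite-coefficient space. First, $\mathcal{H}$ identifies isometrically with $\ell_2(\N, \mathsf{c}^{-1}) := \{(a_k)_{k\in\N} : \sum_k \mathsf{c}_k^{-1} a_k^2 < \infty\}$ via the Hermite expansion map, by the very definition of $\|\cdot\|_\mathcal{H}$; hence $\mathcal{H}$ inherits a Hilbert space structure. Since $(\mathsf{c}_k) \in \ell_2 \subset \ell_\infty$, there is $M$ with $\mathsf{c}_k \leq M$ for all $k$, whence $\|f\|_{L^2_\gamma}^2 \leq M\,\|f\|_\mathcal{H}^2$, yielding continuous inclusion into $L^2_\gamma$. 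Density in $L^2_\gamma$ is immediate: each $h_k$ lies in $\mathcal{H}$ with $\|h_k\|_\mathcal{H}^2 = \mathsf{c}_k^{-1} < \infty$, so the subspace of finite Hermite combinations is contained in $\mathcal{H}$ and is already dense in $L^2_\gamma$ since $(h_k)$ is a complete orthonormal basis.

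The only analytic input is a pointwise bound on Hermite polynomials: I would invoke the classical uniform estimate $|h_k(x)| \leq C\, e^{x^2/4}$ valid for all $x \in \R$ and all $k \in \N$, which follows from the uniform boundedness (Szeg\H{o}'s estimate) of the $L^2(\R)$-orthonormal Hermite functions $\psi_k(x) := (2\pi)^{-1/4} h_k(x)\, e^{-x^2/4}$. By Cauchy--Schwarz on $\ell_2(\N, \mathsf{c}^{-1})$,
\[
\sum_k |a_k h_k(x)| \leq \|f\|_\mathcal{H} \Bigl(\sum_k \mathsf{c}_k h_k(x)^2\Bigr)^{1/2} \leq C\, e^{x^2/4}\, \|f\|_\mathcal{H} \Bigl(\sum_k \mathsf{c}_k\Bigr)^{1/2} < +\infty,
\]
so $\sum_k a_k h_k(x)$ converges absolutely and locally uniformly, placing $f$ in $\mathcal{C}(\R)$ and legitimizing pointwise evaluation at every $x$. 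The weaker assumption $\sum_k \mathsf{c}_k < \infty$ already suffices here; the stronger $\sum_k k\,\mathsf{c}_k < \infty$ will be needed later for derivative-level manipulations of $K$.

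The reproducing property then follows by direct manipulation on the coefficient side. Fixing $x \in \R$, the series $K(x,\cdot) = \sum_k \mathsf{c}_k h_k(x)\, h_k$ has Hermite coefficients $(\mathsf{c}_k h_k(x))_k$, so
\[
\|K(x,\cdot)\|_\mathcal{H}^2 = \sum_k \mathsf{c}_k^{-1}(\mathsf{c}_k h_k(x))^2 = \sum_k \mathsf{c}_k h_k(x)^2 = K(x,x) < +\infty,
\]
putting $K(x,\cdot) \in \mathcal{H}$. For any $f = \sum_k a_k h_k \in \mathcal{H}$,
\[
\langle f, K(x,\cdot) \rangle_\mathcal{H} = \sum_k \mathsf{c}_k^{-1}\, a_k \cdot \mathsf{c}_k h_k(x) = \sum_k a_k h_k(x) = f(x).
\]
The feature map $\phi(x) := (\sqrt{\mathsf{c}_k}\, h_k(x))_{k \in \N}$ lies in $\ell_2$ with $\|\phi(x)\|_{\ell_2}^2 = K(x,x)$, and $K(x,y) = \langle \phi(x), \phi(y) \rangle_{\ell_2}$ holds by construction, which incidentally yields positive definiteness of $K$.

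The main—indeed only—obstacle is securing the classical uniform pointwise bound on Hermite polynomials; once that is in hand, absolute pointwise convergence of every Hermite expansion at play follows from Cauchy--Schwarz and all remaining assertions reduce to routine coefficient manipulations via the isometry $\mathcal{H} \cong \ell_2(\N, \mathsf{c}^{-1})$. A small amount of additional bookkeeping is needed to exchange summations in the reproducing identity, but this is justified by the absolute-convergence bound displayed above.
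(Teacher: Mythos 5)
Your proposal is correct, and it is worth noting that the paper does not actually supply a proof of this proposition: it delegates the construction and its smoothness properties to the cited reference on Hermite-based RKHSs (Irrgeher--Leobacher), so your argument fills in what the paper leaves implicit. Your route is the standard one for building an RKHS from an orthonormal basis $(h_k)$ of $L^2_\gamma$ and a summable positive weight sequence: the isometry $\mathcal{H}\cong\ell_2(\N,\mathsf{c}^{-1})$ gives the Hilbert structure, and everything else hinges on the single analytic input $\sup_k |h_k(x)|e^{-x^2/4}\le C$, which is indeed Cram\'er's inequality transported to the probabilists' normalization; combined with $\sum_k \mathsf{c}_k<\infty$ it yields $K(x,x)<\infty$, locally uniform absolute convergence of the expansions (hence continuity of the representative and boundedness of point evaluations), the membership $K(x,\cdot)\in\mathcal{H}$, and the reproducing identity by termwise computation. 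Density is correctly reduced to the fact that finite Hermite combinations lie in $\mathcal{H}$. You are also right that the stronger hypothesis $\sum_k k\,\mathsf{c}_k<\infty$ is not needed for this proposition itself. The only point I would make explicit is completeness of $\mathcal{H}$ as defined (a subset of $L^2_\gamma\cap\mathcal{C}(\R)$): a Cauchy sequence in $\mathcal{H}$ has coefficient sequences Cauchy in $\ell_2(\N,\mathsf{c}^{-1})$, and your locally uniform bound shows the limiting coefficient sequence produces a function that is again continuous and square-integrable, so the image of the isometry is all of $\mathcal{H}$ and is closed; this is implicit in your write-up but deserves a sentence.
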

Note that similar RKHSs have already been considered for numerical computation of expectation with respect to Brownian paths, e.g.~\cite{irrgeher2015high} and statistical estimation of multi-index problems~\cite{follain2023nonparametric}. We refer to the first cited paper for the construction as well as more on smoothness properties of such RKHSs. The RKHS norm $\| f  \|_{\mathcal{H}}^2$ is thus of the form $\sum_k \mathsf{c}^{-1}_k \langle f, h_k \rangle^2 $, and therefore defines a weighted Sobolev space, similar to $\mathcal{H}^p$, if $c$ decreases as a power law. Note also that it is possible to use a random process to express the kernel, as it has been already pinned as a \textit{random feature expansion of the kernel}~\cite{rahimi2007random} (see \cite{bietti2023learning} for such a construction). Note that this could also lead to some regularization, by considering the loss $\mathcal{L}(f, w) + \mu \|f\|_{\mathcal{H}}^2/2$ that would act as a threshold of the high frequencies of the function $\varphi^\star$.
\paragraph{Approximation with finite dimensional RKHS.} Obviously, one can define for any integer $\mathsf{k} \in \N^*$, a finite dimensional RKHS that we will denote $\mathcal{H}_\mathsf{k}$ by truncating the feature map $\phi$ as $\phi_\mathsf{k} = (\sqrt{c_0} h_0, \sqrt{c_1} h_1, \hdots, \sqrt{c_\mathsf{k}} h_\mathsf{k} ) \in \R^{\mathsf{k+1}}$. This enables in practice to run the gradient flow on $((a_{k})_{k \leq \mathsf{k}}, m)$ as a true ODE, or a numerical method, by discretizing it via a Euler scheme that fits on a computer. Obviously, the larger the $\mathsf{k}$, the better the approximation will be as for any $f \in \mathcal{H}^1$,
\begin{align*}
    \inf_{g \in \mathcal{H}_\mathsf{k}} \| f - g \|^2_{L^2_\gamma} = \sum_{k \geq \mathsf{k} + 1}  a_k^2 \leq \sqrt{\sum_{k \geq \mathsf{k} + 1}  k^2 a_k^2} \sqrt{\sum_{k \geq \mathsf{k} + 1}  \frac{a_k^2}{k^2}} \leq \frac{\|f\|^2_{\mathcal{H}^1}}{\mathsf{k}^2}~.
\end{align*}
This is what is done practically as pictured in the following section.

\section{Numerical Experiments}

To validate and illustrate our theoretical findings, we present two sets of numerical experiments. The first set directly discretizes the idealized gradient flow equations derived in the infinite-dimensional, population-risk setting, whereas the second set considers the empirical loss and finite-dimensional approximation described in Section~\ref{sec:rkhs}. Both experiments aim to shed light on the characteristic fast–slow dynamics of the system and the role of the symmetry-breaking in the joint learning process.

\paragraph{Simulation of the idealized dynamics.}

In this first set of experiments, we study the discretized gradient flow equations for the scalar coefficients $a_{k,t}$ and the alignment parameter $m_t := \langle w_t, w^\star \rangle$, in the regime where the planted function corresponds to $\varphi^\star = h_2 + h_3 - h_4$, i.e., a linear combination of Hermite polynomials up to degree four. This choice corresponds to the setting $s = 2$, meaning that the first nonzero Hermite coefficient is $a_2^\star = 1$. We initialize the correlation at a very small positive value $m_0 \sim 10^{-4}$ to mimic the weak alignment typically induced by high-dimensional random initialization (e.g., when $w_0 \sim \text{Unif}(\mathcal{S}_{d-1})$ with $d \gg 1$). The coefficients $a_{k,0}$ are initialized to $\{\pm 1\}$ for all $k \leq \mathsf{k}$. Figure~\ref{fig:phase_portraits} displays the phase portrait of the dynamics of $(m_t, a_{4,t})$ planes, along with the full learning curve of $m_t$ and $a_{4,t}$ over time\footnote{We picked at random the coefficient $a_{k,t}$ to be shown, the behavior of the other are similar.}. We observe a sharp transition in behavior, in line with the theory developed for the case $s \geq 2$. Note that as long as $m,a_4 \ll 1$, the phase portrait perfectly follows the curve $x = - y^4$ (in green in Figure~\ref{fig:phase_portraits}), showing that at this stage that last a time of order $d^{s-1}$, the dynamics reaches a partial equilibrium in terms of $a$.
%
\begin{figure}[ht]
    \centering
    \begin{subfigure}[t]{0.49\textwidth}
        \centering
        \includegraphics[width=\linewidth]{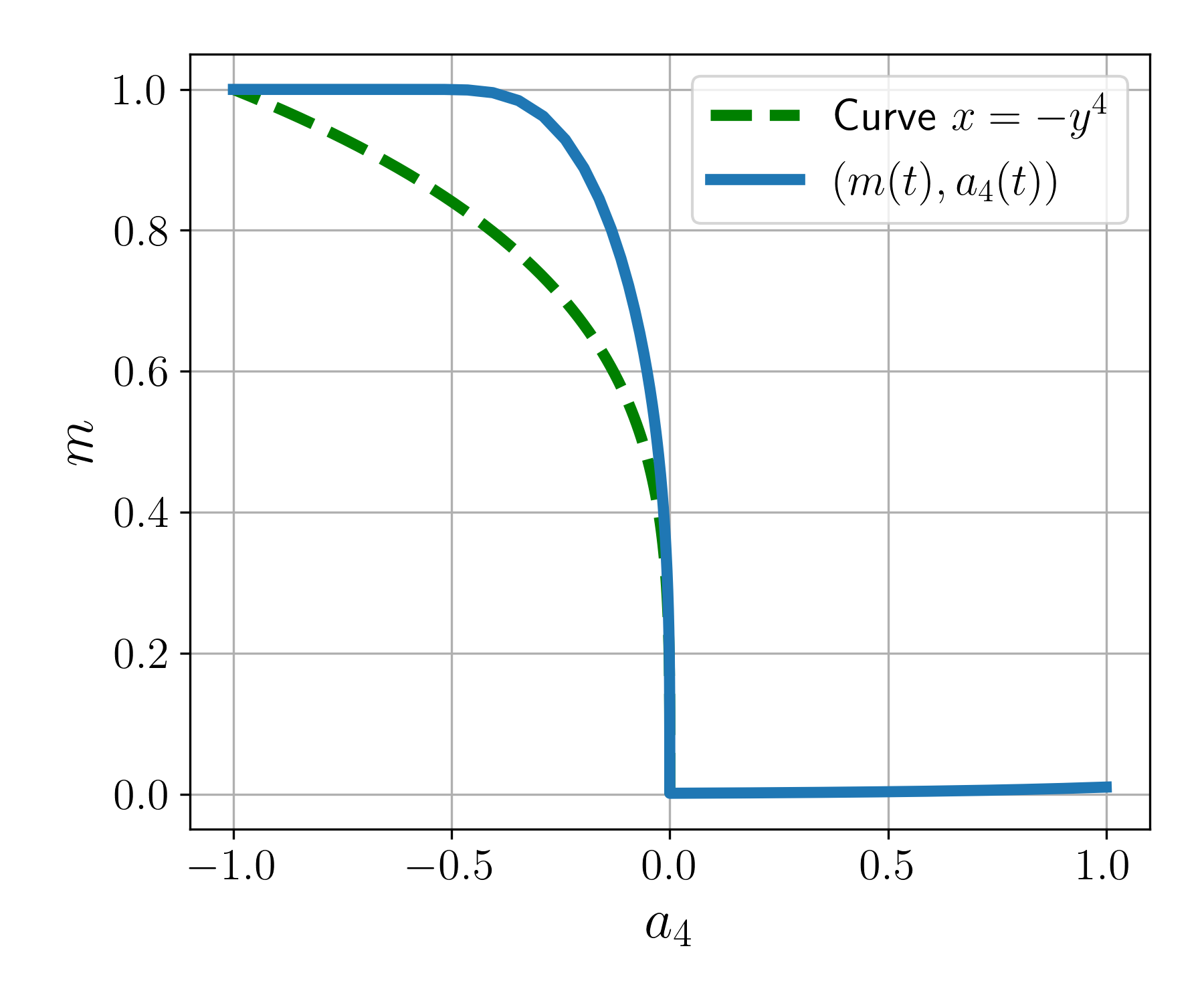}
        \label{fig:phase-a2}
    \end{subfigure}
    \hfill
    \begin{subfigure}[t]{0.49\textwidth}
        \centering
        \includegraphics[width=\linewidth]{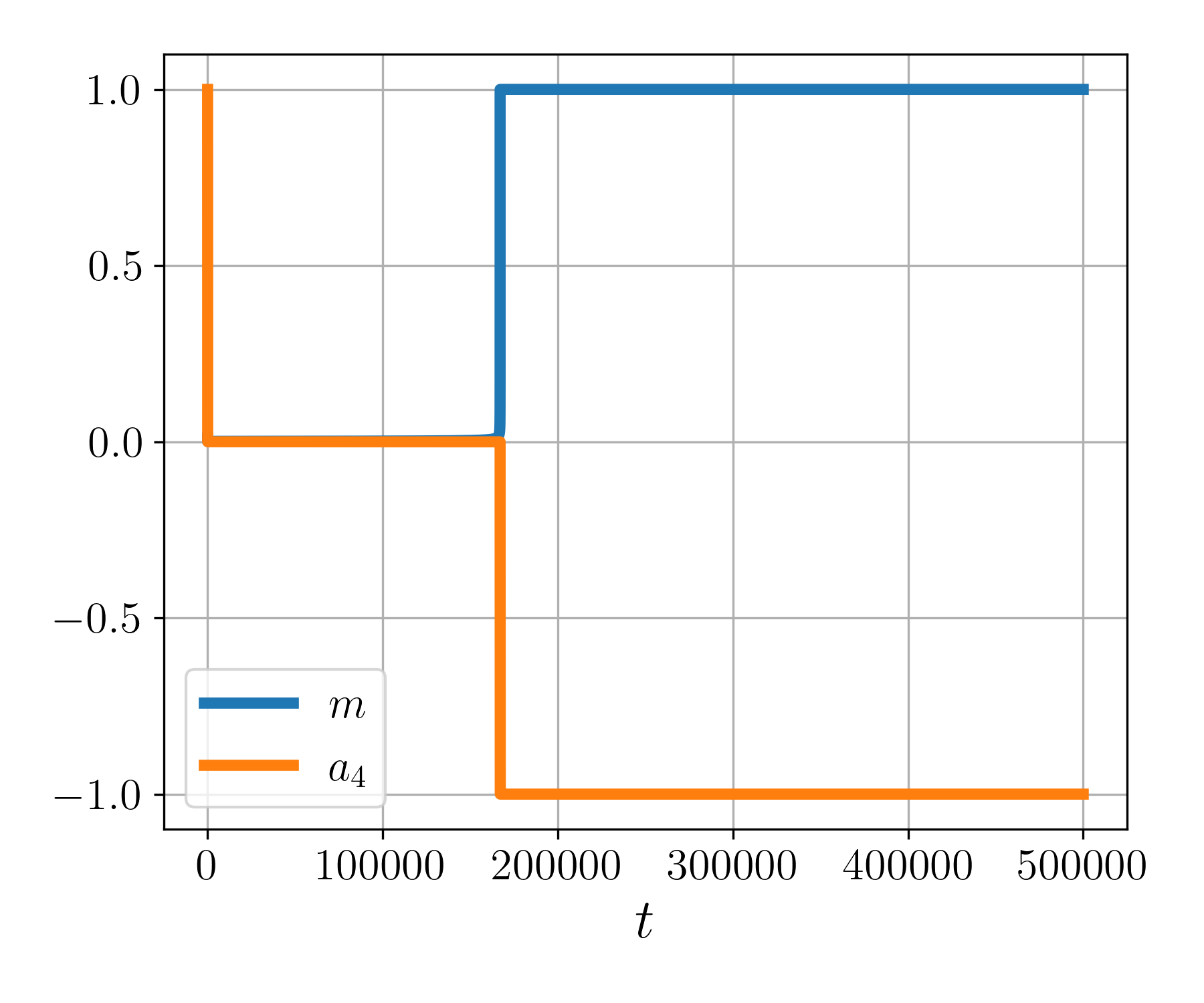}
        \label{fig:phase-a4}
    \end{subfigure}
    \caption{At early stages, the weak signal from $m_0$ induces a fast decay of the coefficient $a_{4,t}$, which acts as a passive mode. Then, the dynamics enters a phase of partial equilibrium (slow-fast dynamics), where $a_{k,t} \simeq a_k^\star m_t^k$ and $m_t$ grows very slowly. Once $m_t$ crosses a critical threshold, the dynamics switch regimes and $a_{4,t}$ begins to grow, reinforcing the directional alignment.}
    \label{fig:phase_portraits}
\end{figure}
\vspace*{0.01cm}
\begin{minipage}{0.6\textwidth}
        \textbf{Empirical RKHS-based dynamics.} In the second set of experiments, we implement the joint learning procedure in the practical setting described in Section~\ref{sec:rkhs}. The function class is represented by a finite-dimensional truncation of the Hermite-based RKHS, and the empirical risk is minimized via standard gradient descent. We generate a dataset of $n = 10^5$ i.i.d. Gaussian inputs in dimension $d = 10^2$, with labels computed from the planted function $\varphi^\star(x) = 2 h_2(\langle w^\star, x \rangle)$, and run gradient descent on the empirical loss using the RKHS parameterization. Both the direction $w_t$ and the coefficients $a_{k,t}$ are updated jointly.
        Figure~\ref{fig:empirical_dynamics} shows the evolution of the alignment $m_t$, as well as the coefficient $a_{2,t}$, as a function of time. The qualitative behavior closely matches the idealized dynamics described earlier, which seems to imply that with enough samples, and small enough step size, the empirical dynamics and the idealized one match. 
    \end{minipage}%
    \hfill
    \begin{minipage}{0.4\textwidth}
        \centering
        \includegraphics[width=0.95\linewidth]{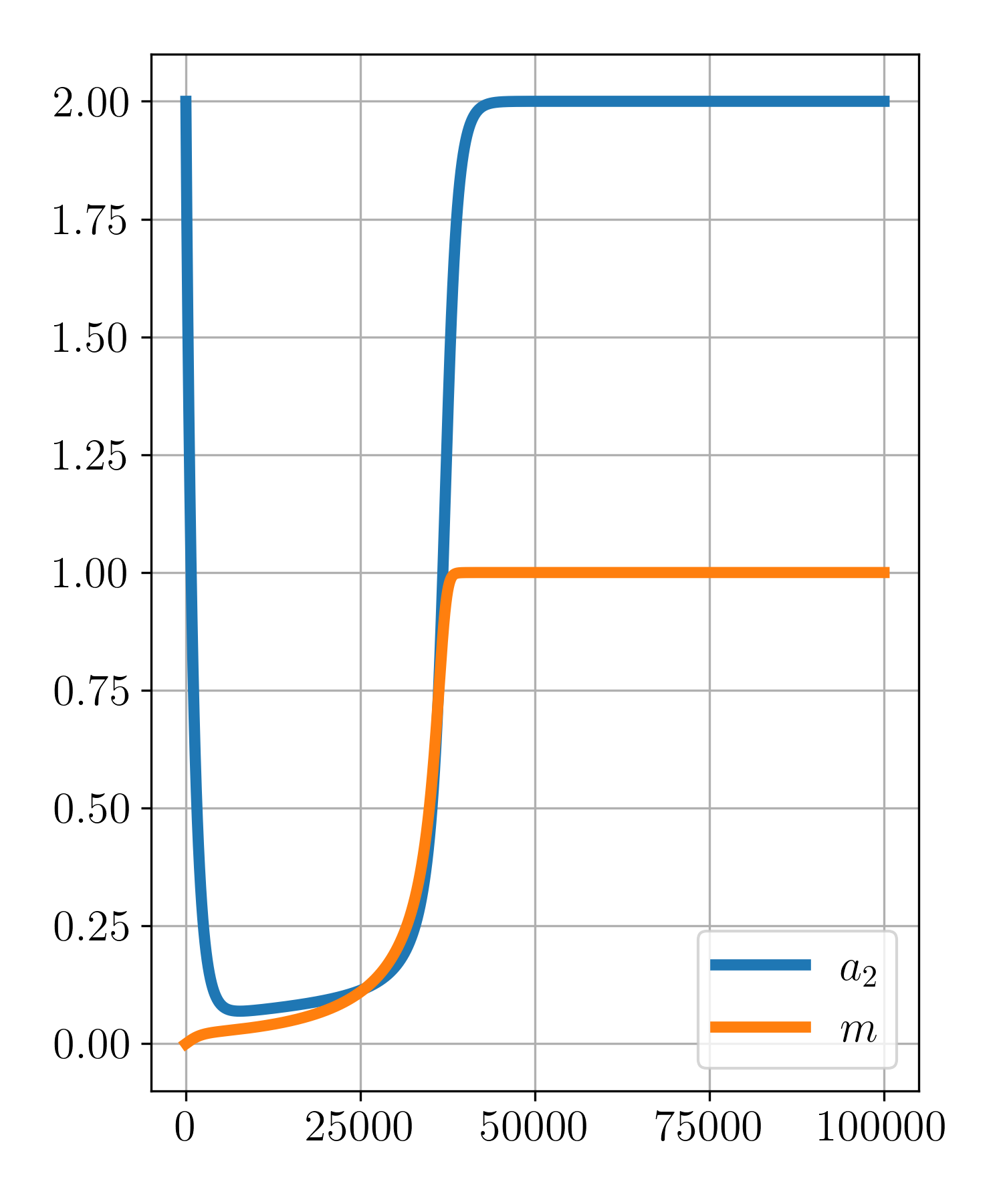}
        \label{fig:empirical_dynamics}
    \end{minipage}
\par \vspace{0.5cm}
These experiments confirm that the key phenomena predicted in the population-risk regime persist in the finite-sample setting, even under stochastic initialization and discretization. They also validate the practical relevance of the RKHS-based modeling approach for analyzing learning in single-index neural architectures.

\section{Conclusion and Outlook}

In this work, we have analyzed the dynamics of a joint gradient flow designed to learn a univariate function of a linear projection. The evolution of the system exhibits a striking separation of timescales between the nonparametric learning of the scalar function and the estimation of the projection direction. This separation gives rise to a fast–slow dynamical regime, especially pronounced in high dimensions and for complex target functions. 
A particularly intriguing outcome of our analysis is that joint learning—despite its apparent lack of structural knowledge—can outperform scenarios in which the target function is fixed \emph{a priori}. In some cases, fixing the function can trap the dynamics in a suboptimal basin, whereas the flexibility of joint learning allows the system to escape such traps and achieve global convergence. This behavior echoes a broader phenomenon observed in overparameterized models, such as neural networks, where learning flexibility often aids in finding better minima.

With the temporal structure of the flow now well understood, a natural next step is to investigate the \emph{sample complexity} of joint learning~\cite{damian2024computational,maillard2020phase}. Our analysis has so far focused on the infinite-sample (population) regime, but practical implementations rely on finite datasets~\cite{mei2018landscape}, with a finite truncation of the Hermite expansion and a discrete-time algorithm. In such settings, both the estimation of the functional coefficients \( a_{k,t} \) and the evolution of the correlation parameter \( m_t \) are impacted by statistical noise and generalization error.
Quantifying the number of samples required to reliably recover the target coefficients and ensure convergence is thus an essential direction for future work. In particular, it is important to understand how this sample complexity depends on the initialization \( m_0 \), the number of learned coefficients, and the spectral structure of the target~\cite{ben2022high}. Such an analysis would bridge the gap between the idealized continuous-time dynamics studied here and the finite-data, discrete-time algorithms employed in practice. Ultimately, this could lead to a rigorous understanding of stochastic gradient descent in the joint learning setting, whose covariance noise structure~\cite{schertzer2024SGD} should play a prominent role during the early times.

\vspace{0.25cm}

{\noindent \bfseries Acknowledgments.} 
A. S. acknowledges support by the Deutsche Forschungsgemeinschaft (DFG, German Research Foundation)- project number 432176920. L.P-V. acknowledges support by the Fondation Bezout. \textcolor{white}{Adrien and Loucas are also grateful to ChatGPT for its infinite support, which arguably should earn it co-authorship.}


\bibliographystyle{plain}
\bibliography{Joint_Learning.bib}


\clearpage

\appendix

\noindent {\huge \textbf{Appendix}\\}

The appendix is organized as follows:

\begin{itemize}
    \item We first present the proof of Lemma \ref{lem:Lossexpansion} in Section \ref{Sec:Problem-Setup}, which concerns the form of the loss.  
    \item In Section \ref{Sec:GFD}, we prove Lemma \ref{lem:ODE_a_m}, which describes the dynamics of the Hermite coefficients and the order parameter $m$.
    \item At the beginning of Section \ref{Sec:JGF}, we look at the limitation of the dynamics for the planted model and prove Proposition \ref{prop:planted-bifurcation}.
    \item The rest of Section \ref{Sec:JGF} is dedicated to the proof of the joint dynamics, which corresponds to Theorem \ref{thm:main}.
\end{itemize}

In the following, the constants $c$ and $C$ denote dimension-independent constants whose values may change from line to line throughout the proofs.

\section{Problem Setup}\label{Sec:Problem-Setup}

We begin by proving Lemma~\ref{lem:Lossexpansion} about the expansion of the loss.
\begin{proof}[Proof of~Lemma~\ref{lem:Lossexpansion}]
We first recall that
\begin{align*}
     \mathcal{L}(f, w) &= \frac{1}{2}\mathbb{E}\left[(f(\langle w, X \rangle) - \varphi^\star(\langle w^\star, X \rangle)-\varepsilon)^2\right]\\
     &=\frac{1}{2} \|f\|^2_{L^2_\gamma} + \frac{1}{2}\|\varphi^\star\|^2_{L^2_\gamma} +  \frac{1}{2} \mathrm{Var}(\varepsilon)-\mathbb{E}\left[(f(\langle w, X \rangle)\varphi^\star(\langle w^\star, X \rangle)\right]  ~,
\end{align*}
by developing the square and using the fact that $\langle w, X \rangle \sim \langle w^\star, X \rangle \sim \mathcal{N}(0, 1)$ and that $\varepsilon$ is independent of $X$. We recall that our analysis is based on the \emph{Hermite polynomials} \( (h_k)_{k \geq 0} \), which form a complete orthonormal basis and we deduce that
\[
\|f\|^2_{L^2_\gamma}=\sum_{k=0}^{+\infty} a_k^2~.
\]
In addition, one readily check that 
\[
\text{Cov}(\left\langle w, X \right\rangle, \left\langle w^\star, X \right\rangle)=\left\langle w, w^* \right\rangle=:m~,
\]
and by a simple covariance calculation, we obtain
\begin{align*}
\mathbb{E}\left[f\left(\left\langle w, X \right\rangle\right) \varphi^{\star} \left(\left\langle w^*, X \right\rangle\right) \right]&=\E_{Y,Z}\left[ f \left(Y\right) \varphi^\star \left(mY + \sqrt{1-m^2}Z \right) \right]\\
&=\E_{Y}\left[ f \left(Y\right) \E_{Z}\varphi^\star \left(mY + \sqrt{1-m^2}Z \right) \right]~,
\end{align*}
where $Y,Z$ are i.i.d. standard Gaussian. A fundamental fact is that the operator 
\[
T_m[g](y)=\E_{Z}\left[ g \left(my + \sqrt{1-m^2}Z \right) \right]
\]
is diagonal in the \emph{Hermite polynomials} basis~\cite[Section 2.7.1.]{bakry2013analysis} and we have
\[
T_m[h_i]=m^i h_i~.
\]
Thus, by writing $f, \varphi^\star$ in the the Hermite polynomials basis, we obtain
\begin{align*}
\mathbb{E}\left[f\left(\left\langle w, X \right\rangle\right) \varphi^{\star} \left(\left\langle w^*, X \right\rangle\right) \right]&=\E_{Y}\left[ f\left(Y\right)\sum_{i=0}^{\infty} a_i^*T_{m}[h_i](Y) \right] \\
&=\sum_{i=0}^{\infty} \sum_{k=0}^{\infty} a_i^* a_k {m}^{i}\E_{Y}\left[  h_k(Y)h_i(Y) \right] \\
&=\sum_{k=0}^{\infty} a_k a_k^* {m}^{k}~,
\end{align*}
the proof follows.
\end{proof}
In this paper, we are interested in two different gradient flows. Their description is the subject of the next section.
\section{Gradient Flow Dynamics} \label{Sec:GFD}

\paragraph{Gradient flow on $L_\gamma^{2} \times \mathcal{S}_{d-1}$.} In this section, we derive the equations of motions of the gradient flow on $\mathcal{L}(f, w)$. We say that $(f_t, w_t)_{t \geq 0}$ follow the gradient flow if they are solution of the infinite dimensional system of ODEs
\begin{align*}
    \frac{\text{d}}{\text{d} t} f_t &= - \nabla^{L_\gamma^{2}}_f \, \mathcal{L}(f_t, w_t) ~, \\ 
    \frac{\text{d}}{\text{d}t} w_t &= - \nabla^{\mathcal{S}_{d-1}}_w \, \mathcal{L}(f_t, w_t)~.
\end{align*}
This corresponds to the infinite system of ODEs stated in the Lemma~\ref{lem:ODE_a_m} and proven below.

\begin{proof}[Proof of Lemma~\ref{lem:ODE_a_m}]
We recall the spherical gradient $\nabla^{\mathcal{S}_{d-1}}_w u(w) = \nabla_w u(w) - \langle w, \nabla_w u(w) \rangle w $ for any differentiable function $u$. Thus, one readily checks that Eq. \eqref{eq:equation_w} can be written as
\begin{align*}
   \frac{\text{d} }{\text{d}t} w_t =-(Id-w_t w_t^T)  \nabla_w  \mathcal{L}(f_t, w_t)~.
\end{align*}
Using Eq. \eqref{eq:loss_unfold_a_m}, we obtain 
\begin{align*}
   \frac{d m_t}{dt}= \frac{\text{d} (w^\star)^T w_t  }{\text{d}t} =(w^\star)^T (Id-w_t w_t^T) \sum_{i=1}^{\infty} a_{i,t} a_i^* i {m_t}^{i-1} w^\star
   = \left(1-{m_t}^{2}\right) \sum_{i=1}^{\infty} i a_{i,t} a_i^*  {m_t}^{i-1}  ~.
\end{align*}
Then, we use the Hilbert property of $(L_\gamma^{2}, \langle \cdot, \cdot \rangle_{\mathcal{L}_\gamma^{2}})$ and Riesz theorem to the represent the Fréchet differential of $\mathcal{L}(\cdot)$ in terms of a gradient, i.e. for all $g$ sufficiently smooth,
\begin{align*}
    \left\langle \nabla^{L_\gamma^{2}}_f \, \mathcal{L}(f, w), g\right\rangle_{L_\gamma^{2}} = \lim_{h \to 0} \frac{\mathcal{L}(f + h g, w) - \mathcal{L}(f, w)}{h}~.
\end{align*}
A simple calculation leads to
\begin{align*}
  \mathcal{L}( f+h g, w) = \mathcal{L}(f, w)+\frac{h^2}{2} \mathbb{E}\left(g^2(\left\langle X, w \right\rangle)\right)+h \E((f(\langle X, w \rangle)-\varphi^*(\langle X, w^{\star} \rangle)) g(\langle  X,w \rangle)))  ~,
\end{align*}
and we have 
\begin{align*}
\E((f(\langle X, w \rangle)-\varphi^*(\langle X, w^{\star} \rangle)) g(\langle  X,w \rangle))) &=\E_Y((f(Y)-T_m[\varphi*](Y))  g(Y))\\
&= \E_Y((f(Y)-\sum_{k=0}^{+\infty}a_k^*m^k h_k(Y))  g(Y)) \\
&=\langle f-\sum_{k=0}^{+\infty}a_k^*m^k h_k,  g\rangle~,
\end{align*}
from which we deduce that
\begin{align*}
\nabla^{L_\gamma^{2}}_f \, \mathcal{L}(f, w)=f-\sum_{k=0}^{+\infty}a_k^*m^k h_k=\sum_{k=0}^{+\infty}\left(a_k-a_k^*m^k\right) h_k~.
\end{align*}
We recall that \emph{Hermite polynomials} \( (h_k)_{k \geq 0} \) form a complete orthonormal basis and we deduce that
\begin{align*}
   \frac{d a_{k,t}}{dt} =a_k^*m_t^k -a_{k,t},~\forall  k\geq 0, \,  t\geq 0
\end{align*}
with Eq. \eqref{eq:equation_f} .
\end{proof}
We now turn to the convergence of the flow and the advantage of a coupled dynamics.

\section{Convergence Behavior of the Joint Gradient Flow} \label{Sec:JGF}

\paragraph{The Planted Model.} 
In this case, the dynamics reduce to a one-dimensional ordinary differential equation for the correlation \( m_t := \langle w_t, w^\star \rangle \), as previously derived
\[
\dot{m}_t = (1 - m_t^2) \sum_{k=1}^\infty k (a_k^\star)^2 m_t^{k-1}.
\]
We show here the Proposition~\ref{prop:planted-bifurcation} about the planted model. We restate it here for the sake of clarity:
\paragraph{Proposition 1.}
\begin{itemize}
    \item If \( m_0 > 0 \), the trajectory converges to alignment: \( \displaystyle \lim_{t \to \infty} m_t = 1 \).
    \item If \( m_0 < 0 \), then, (i) if $s \geq 3$ is odd, we have \( \displaystyle \lim_{t \to \infty} m_t = 0 \); 
    
    \hspace*{1.5cm} else, (ii) if $s$ is even, then 
        \begin{itemize}
            \item If $\bar{m} > 1$, the trajectory converges to anti-alignment: \( \displaystyle \lim_{t \to \infty} m_t = -1 \).
            \item If $\bar{m} < 1$, the trajectory remains trapped in a local minimizer and: \( \displaystyle \lim_{t \to \infty} m_t = - \bar{m} \).
        \end{itemize}
    \end{itemize}
\begin{proof}[Proof of Proposition~\ref{prop:planted-bifurcation}]
If \( m_0 > 0 \), it is clear that $\dot{m}_{t}(0)> 0$. By continuity, $m_t$ strictly increases until $(1-x^2)\Phi(x)=0$ for $x> 0$, which happens only for $x=1$, where the system then stabilizes: This follows by combining $\dot{m}_{t} \geq (1-m_t^2) s (a^*)^2 (m_0)^{s-1}$ with Gronwall's Lemma and $m_t \leq 1$. \\
If \( m_0 < 0 \), and $s \geq 3$ is odd, we have $\dot{m}_{t}(0)> 0$. By continuity, $m_t$ increases until $(1-x^2)\Phi(x)=0$ for $x> m_0$, which corresponds to $x=0$ for $m_0$ arbitrary small. By Picard–Lindelöf theorem, we have uniqueness of the solutions and thus $m_t < 0$. We deduce that $\dot{m}_{t} \geq c (m_t)^{s-1}$, which lead to the convergence to $0$ by Gronwall's Lemma.\\
If $s$ is even, then , we have $\dot{m}_{t}(0)< 0$. By continuity, $m_t$ decreases until $(1-x^2)\Phi(x)=0$ for $x \leq m_0$, which corresponds to $x=- \bar{m}$ if $\bar{m} < 1$ or $x=-1$ if $\bar{m} > 1$. In the case where $\bar{m} < 1$, by Picard–Lindelöf theorem, $m_t > - \bar{m}$ and we conclude by noting that $m_t$ cannot converge to a larger $\alpha> - \bar{m}$. If it did, as $m_t$ is decreasing, $\alpha$ should be a zero of $\Phi(x)$. The case $\bar{m} > 1$  follows by a similar argument.  \\
\end{proof}

Proposition \ref{prop:planted-bifurcation} highlights the limitations of the planted model. The joint dynamics does not suffer from such issues.

\subsection{Main theorem: convergence in the Joint Learning Setting}

We now move on to solving the system of differential equations \eqref{eq:equation_a}-\eqref{eq:equation_m}. Note that if $a_k^*=0$, $a_{k,t}$ tends exponentially fast to $0$ by \eqref{eq:equation_a} and the dynamics of $m_t$ is independent of such $a_{k,t}$. Therefore, from now on, we are interested only in the $a_{k,t}$ such that $a_k^*\neq 0$. \textit{In passing}\footnote{as we say in the London Society Club of French Gentlemen.}, we mention that $f_t \in L^2_\gamma$ for all $t\geq 0$. Indeed, the following states that the $a_{k,t}$ remain within $[-\max{(|a_k^*|,|a_{k,0}|)} ,|\max{(|a_k^*|,|a_{k,0}|)}]$. 
Indeed,
\begin{fact}
We have $|a_{k,t}|\leq \max(|a_k^*|,|a_{k,0}|)$. If $|a_{k,t}|\leq|a_k^*|$ for some $t$, then for $t'\geq t$, $|a_{k,t'}| \leq |a_k^*|$. Finally, If it does not exist $t$ such that $|a_{k,t}|\leq|a_k^*|$, then $a_{k,t}$ tends exponentially fast to $a_k^*$ (monotonically).  Hence, for all $t \geq 0$, $f_t \in L^2_\gamma$ and the flow is well defined at all times.
\end{fact}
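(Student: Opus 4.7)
The plan is to exploit the fact that Eq.~\eqref{eq:equation_a} is a \emph{linear} first-order ODE in $a_{k,t}$ whose inhomogeneous forcing $a_k^\star m_t^k$ is uniformly bounded by $|a_k^\star|$, since $|m_t|\leq 1$ on the sphere. Variation of constants yields
\[
a_{k,t} = a_{k,0}\, e^{-t} + a_k^\star \int_0^t e^{-(t-s)} m_s^k \, ds,
\]
and taking absolute values gives $|a_{k,t}| \leq |a_{k,0}| e^{-t} + |a_k^\star|(1 - e^{-t})$. This convex combination immediately establishes the first claim $|a_{k,t}| \leq \max(|a_k^\star|, |a_{k,0}|)$. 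The second claim follows by applying the same representation starting from an initial time $t_0$ at which $|a_{k,t_0}| \leq |a_k^\star|$: the right-hand side then becomes $|a_{k,t_0}| e^{-(t-t_0)} + |a_k^\star|(1 - e^{-(t-t_0)}) \leq |a_k^\star|$ for all $t \geq t_0$.

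For the third claim, assume $|a_{k,t}| > |a_k^\star|$ for all $t \geq 0$; by continuity of $t \mapsto a_{k,t}$, the trajectory must stay strictly on one side of $[-|a_k^\star|,|a_k^\star|]$, with the sign of $a_{k,0}$. Writing $\varepsilon = \mathrm{sgn}(a_{k,0})$ and $u_t := a_{k,t} - \varepsilon |a_k^\star|$, a direct computation gives
\[
\dot u_t = -u_t + \bigl( a_k^\star m_t^k - \varepsilon |a_k^\star|\bigr),
\]
where the bracketed term has sign opposite to $u_t$ because $|a_k^\star m_t^k| \leq |a_k^\star|$. A one-line Gr\"onwall argument then gives $|u_t| \leq |u_0| e^{-t}$, hence exponential convergence, while the explicit sign of $\dot a_{k,t}$ (strictly negative when $\varepsilon = 1$, strictly positive when $\varepsilon = -1$) delivers monotonicity. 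The slightly delicate point here, and in my view the only one deserving attention in the whole statement, is the sign bookkeeping: the natural attractor is $\mathrm{sgn}(a_{k,0})\,|a_k^\star|$ rather than $a_k^\star$ itself, and the two coincide precisely when $a_{k,0}$ and $a_k^\star$ share a sign, which is presumably the reading intended in the fact.

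Finally, squaring the first claim gives $a_{k,t}^2 \leq |a_k^\star|^2 + |a_{k,0}|^2$, and summing over $k$ yields $\|f_t\|^2_{L^2_\gamma} \leq \|\varphi^\star\|^2_{L^2_\gamma} + \|f_0\|^2_{L^2_\gamma} < \infty$, so $f_t \in L^2_\gamma$ at all times. On this invariant bounded set, the vector field of \eqref{eq:equation_a}--\eqref{eq:equation_m} is locally Lipschitz (the series defining $\dot m_t$ converges absolutely by Cauchy--Schwarz using $\varphi^\star \in \mathcal{H}^1$), so Picard--Lindel\"of extended globally guarantees existence and uniqueness of the flow for all $t \geq 0$. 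There is no serious mathematical obstacle in any step; the entire fact is essentially a careful reading of the variation-of-constants formula and a sign-matching exercise.
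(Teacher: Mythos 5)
Your proof is correct, and it takes a mildly different route from the paper's. For the first two claims the paper argues directly on the sign of $\dot a_{k,t}$ (the derivative is nonnegative iff $a_{k,t}\leq a_k^\star m_t^k$, which forces the trajectory back into the band $[-|a_k^\star|,|a_k^\star|]$ and makes that band forward-invariant), whereas you integrate the linear ODE explicitly via Duhamel and read the bound off the convex combination $|a_{k,0}|e^{-t}+|a_k^\star|(1-e^{-t})$; the two arguments are equivalent in substance, but yours packages the first two claims in a single formula and is arguably tidier. For the third claim both you and the paper run a Gr\"onwall comparison against the frozen bound $|a_k^\star m_t^k|\leq|a_k^\star|$. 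The one place where you genuinely add value is the sign bookkeeping: the paper only writes out the case $a_{k,0}>a_k^\star>0$ and dismisses the rest as "the same way," but as you observe, when the trajectory stays forever outside the band on the side opposite to $a_k^\star$, the attractor is $\mathrm{sgn}(a_{k,0})\,|a_k^\star|$ rather than $a_k^\star$ itself, so the Fact as literally stated only holds when $a_{k,0}$ and $a_k^\star$ share a sign — a harmless imprecision for the rest of the paper (which only uses the uniform bound and the resulting $\ell_2$ control), but one worth recording. One minor caveat on your side: invoking the variation-of-constants representation presupposes the solution exists on $[0,t]$, so strictly speaking the a priori bound and the Picard--Lindel\"of continuation argument at the end should be run together (bound on any maximal interval of existence, hence no blow-up, hence global existence); the paper is no more careful on this point.
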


\begin{proof}
Using \eqref{eq:equation_a}, it follows that
  \begin{align*}
   \frac{d a_{k,t}}{dt}\geq 0 \iff a_k^*m_t^k \geq a_{k,t} \implies
   \begin{array}{ll}
         a_k^*\geq a_{k,t} \text{ if } a_k^*>0. \\
         -a_k^*\geq a_{k,t} \text{ if } a_k^*<0,
    \end{array}
    \implies |a_{k*}|\geq a_{k,t}
\end{align*}
by using the fact that $|m_t|\leq 1$ and in the same way, $\frac{d a_{k,t}}{dt}\leq 0$ implies that $a_{k,t} \geq -|a_k^*|$. We conclude the proof of the first two points. We now address the third point. For $m_0>0$ and $a_{k,0}>a_{k}^*>0$, we have
   \begin{align*}
 \frac{d a_{k,t}}{dt} \leq a_k^* -a_{k,t},
  \end{align*}
  and by Grönwall's lemma, we obtain that
     \begin{align*}
a_{k,t} \leq a_k^* +(-a_{k}^*+a_{k,0})e^{-t}.
  \end{align*}
By hypothesis $a_{k,t}\geq a_k^*$, which concludes the proof for this case. The other cases work in the same way. 
\end{proof}
We are now in a position to establish the main theorem.

\noindent\textit{Proof of Theorem \ref{thm:main}.} We divide the proof of the theorem into two subsections, the case $s \geq 2$ and the case $s = 1$. The remainder of the appendix is devoted to the proof of this Theorem: The argument proceeds through several key lemmas, which we state and prove below before completing the proof of the theorem.

\subsubsection{Proof of Theorem~\ref{thm:main} for \texorpdfstring{$s \geq 2$}{g>2}}

In all the following, we define for all $\kappa \in (0,1)$, the time at which $m$ attains level set $\kappa > 0$. 
\begin{align*}
    \tau_\kappa = \inf\{ t \geq 0\, | \, |m_t| \geq \kappa  \}~.    
\end{align*}
Denote $u_{k, t} = a_{k, t} - a_k^* m_t^k$ and write $\bar{m}_{[t,t_0]} = \sup_{t_0 \leq s \leq t} |m_s|$. 
The goal of the next lemmas is to show that, after a time of order $\log(1/m_0)$, $m_t$ has barely moved, while an arbitrary number of $a_{k,t}$ are nearly equal to $a_k^* m_t^k$.
\begin{lemma}
\label{lem:tight_a_mk}
    There exists $C > 0$, such that for all $k \geq s$, and all $t \geq t_0 $,  we have
    \begin{align}
    \label{eq:tight_a_mk}
         |a_{k,t} - a_k^* m_t^k| \leq C \bar{m}_{[t,t_0]}^{k + s - 2} + |a_{k,{t_0}} - a_k^* m_{t_0}^k|e^{-{(t - t_0)}}
    \end{align}
\end{lemma}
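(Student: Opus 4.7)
The strategy is to view $u_{k,t} := a_{k,t} - a_k^\star m_t^k$ as the solution of a linear first-order ODE with a small source, and then apply Duhamel / variation of constants. Differentiating $u_{k,t}$ and substituting the evolution equations \eqref{eq:equation_a}--\eqref{eq:equation_m}, the self-interaction term produces exactly $-u_{k,t}$, while $-a_k^\star\,\tfrac{d}{dt} m_t^k$ gives the forcing
\begin{equation*}
\dot u_{k,t} = -u_{k,t} + R_{k,t}, \qquad R_{k,t} = -a_k^\star\, k\, m_t^{k-1}(1-m_t^2)\!\sum_{j\geq s} j\, a_{j,t}\, a_j^\star\, m_t^{j-1}.
\end{equation*}
Since $a_j^\star = 0$ for $j < s$, I can factor $|m_t|^{s-1}$ out of the inner sum, which is the sole place where the information exponent $s$ enters the bound.

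The core of the argument is a $k$-uniform estimate $|R_{k,t}| \leq C\, \bar m_{[t,t_0]}^{k+s-2}$. Three ingredients enter: (i) $|1-m_t^2|\leq 1$ and $|m_t|^{k-1+j-1} \leq \bar m_{[t,t_0]}^{k+s-2}\cdot |m_t|^{j-s}\leq \bar m_{[t,t_0]}^{k+s-2}$ for $j \geq s$; (ii) the $\ell^2$-bound $k|a_k^\star| \leq \|\varphi^\star\|_{\mathcal{H}^1}$ coming from $\sum k^{2}(a_k^\star)^2 < \infty$, which is precisely the role of the standing assumption $\varphi^\star\in\mathcal{H}^1$ and which absorbs the factor $k$ into a constant independent of $k$; and (iii) the uniform summability of $\sum_{j\geq s} j |a_j^\star|\,|a_{j,t}|$, obtained by combining the Fact above (yielding $|a_{j,t}|\leq \max(|a_j^\star|,|a_{j,0}|)$), the initialization $|a_{j,0}| \lesssim 1/j^2$, Cauchy--Schwarz on $\sum j (a_j^\star)^2$ using $\mathcal{H}^1$, and $\sum |a_j^\star|/j < \infty$. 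Together these give a constant $C$ depending only on $\|\varphi^\star\|_{\mathcal{H}^1}$ and the initialization, not on $k$ or $t$.

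Once the source bound is established, variation of constants on the scalar equation $\dot u_{k,t}+u_{k,t}=R_{k,t}$ yields
\begin{equation*}
u_{k,t} = e^{-(t-t_0)}\, u_{k,t_0} + \int_{t_0}^{t} e^{-(t-\sigma)} R_{k,\sigma}\, d\sigma,
\end{equation*}
and bounding the integrand by $C\, \bar m_{[t,t_0]}^{k+s-2}$ (which is monotone in $t$, so can be pulled out of the integral) gives the integral contribution $\leq C\, \bar m_{[t,t_0]}^{k+s-2}$. Taking absolute values and renaming constants produces the claimed inequality \eqref{eq:tight_a_mk}.

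The main obstacle is step (ii)--(iii), i.e.\ getting the $k$-independent constant: a naive bound would pick up factors of $k$ and $|a_k^\star|$ that cannot be summed. The trick is that the forcing only appears in combinations of the form $k a_k^\star$ and $\sum j |a_j^\star|\,|a_{j,t}|$, both of which are controlled \emph{exactly once} one knows $\varphi^\star \in \mathcal{H}^1$ and one has the a priori invariance of the box $|a_{j,t}|\leq \max(|a_j^\star|,|a_{j,0}|)$ established in the Fact preceding the lemma. No further dynamical information about $m_t$ is used, which is important since this lemma will be invoked at a stage where $m_t$ is still essentially frozen near initialization.
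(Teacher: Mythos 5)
Your proposal is correct and follows essentially the same route as the paper: the same quantity $u_{k,t}=a_{k,t}-a_k^\star m_t^k$, the same linear ODE $\dot u_{k,t}+u_{k,t}=R_{k,t}$ with the forcing coming from $-a_k^\star\frac{d}{dt}m_t^k$, the same extraction of $|m_t|^{s-1}$ from the sum over $j\ge s$, and the same Duhamel/Gronwall integration. Your step (ii)--(iii) merely makes explicit the $k$-uniformity of the constant (via $\varphi^\star\in\mathcal{H}^1$ and the a priori bound $|a_{j,t}|\le\max(|a_j^\star|,|a_{j,0}|)$) that the paper asserts more tersely.
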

\begin{proof}
    We have for all $k \geq s, t \geq 0$,
    \begin{align*}
        \dot{u}_{k,t} &= \dot{a}_{k,t} - k a_k^* m_t^{k-1} \dot{m}_t \\
        &= -u_{k,t}  - k a_k^* m_t^{k-1} (1 - {m}_t^2 ) \sum_{k \geq  s} k {a}_{k,t} a^*_{k} m^{k-1}_t~.
    \end{align*}
    That is to say that 
    \begin{align*}
        | \dot{u}_{k,t} + u_{k,t} | & = \left|k a_k^* m_t^{k-1} (1 - {m}_t^2 ) \sum_{j \geq  s} j {a}_{j,t} a^*_{j} m^{j-1}_t \right| \\
        & \leq k |a_k^*| |m_t|^{k-1} (1 - {m}_t^2 ) \sum_{j \geq  s} j |{a}_{j,t} a^*_{j}| |m_t|^{j-1}\\
        & \leq  k |a_k^*||m_t|^{k-1} |m_t|^{s-1}  \sum_{j \geq  s} j |{a}_{j,t} a^*_{j}| |m_t|^{j-s},  \\
   \end{align*}
   and using $|{a}_{j,t}| \leq |{a}_{j,0}|+|{a}_{j}^*| $, we get
    \begin{align*} 
        | \dot{u}_{k,t} + u_{k,t} |
        & \leq k |a_k^*| |m_t|^{k + s-2}  \sum_{j \geq  s} j( |a^*_{j}|^2 + | a_{j, 0} a^*_{j}|) \\
        & \leq k |a_k^*| |m_t|^{k + s-2}  \sum_{j \geq  s} j( 2|a^*_{j}|^2 + | a_{j, 0}|^2) \\
        & \leq C |m_t|^{k + s-2}~,
    \end{align*}
    where $C$ is a uniform in $k$ upperbound on $k |a_k^*| \sum_{j \geq  s} j( |a^*_{j}|^2 + | a_{j, 0} a^*_{j}|)$.
    Applying the Gronwall inequality between $t$ and $t_0$, we have that
    \begin{align*}
        | u_{k,t} - u_{k,t_0} e^{-(t-t_0)} | & \leq C e^{-t} \int_{t_0}^t |m_u|^{k + s -2} e^u d u \leq C \bar{m}_{[t,t_0]}^{ k + s - 2}~,
    \end{align*}
    which ends the proof of the lemma.
\end{proof}
Let $k^* \geq s$ an integer that we consider large enough, with no dependency on the dimension though. Define $t_{k^*} = C k^* \ln(1/m_0)$. The first step is to establish that $m_t$ stays near its initial value $m_0$ for a duration of order $m_0^{1-s}>>t_{k^*}$ and that $\dot{m}_{t_{k^*}}$ has the sign of the initial value.
\begin{lemma}
\label{lem:preparation}
    At time $t_{k^*}$, for all $k \leq k^*$, we have $a_{k,t_{k^*}}a_k^* m_{t_{k^*}}^{k} > 0$. Moreover, 
    \begin{itemize}
        \item if $m_0 > 0$, then $C m_0> m_{t} > c m_0$ for $t\leq \frac{c}{m_0^{s-1}}$ and $ \dot{m}_{t_{k^*}} > 0$,
        \item if $m_0 < 0$, then $-C m_0< m_{t} < -c m_0$ for $t\leq \frac{c}{m_0^{s-1}}$ and $ \dot{m}_{t_{k^*}} < 0$.
    \end{itemize}
\end{lemma}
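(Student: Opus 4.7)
The plan is to exploit the slow--fast separation inherent in the dynamics: the coefficients $a_{k,t}$ relax exponentially at rate~$1$ (Eq.~\eqref{eq:equation_a}), whereas $\dot m_t$ is pinned to size $|m_t|^{s-1}$ near the equator (Eq.~\eqref{eq:equation_m}). Over the window $[0, t_{k^\star}]$ with $t_{k^\star} = C k^\star \log(1/|m_0|)$, I expect $m_t$ to remain essentially frozen at $m_0$ while the first $k^\star$ Hermite modes equilibrate to $a_k^\star m_t^k$. The argument will proceed in three steps.

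\emph{Step 1 (freezing of $m_t$).} I first bound $|\dot m_t| \leq C |m_t|^{s-1}$: the fact preceding the lemma gives $|a_{k,t}| \leq \max(|a_k^\star|, |a_{k,0}|)$, and combined with $\varphi^\star \in \mathcal{H}^1$ and $f_0 \in \mathcal{H}^2$ this makes $\sum_{k \geq s} k |a_{k,t}||a_k^\star|$ uniformly finite via Cauchy--Schwarz in $\ell_2(\mathbb{N})$. Factoring $|m_t|^{s-1}$ out of Eq.~\eqref{eq:equation_m} yields the claimed bound, from which a one-dimensional comparison (separation of variables) shows that $|m_t|$ stays in $[|m_0|/2, 2|m_0|]$ with its initial sign preserved on an interval of length $\Omega(|m_0|^{1-s})$, which comfortably contains $[0, t_{k^\star}]$ since $\log(1/|m_0|) \ll |m_0|^{1-s}$ whenever $s \geq 2$.

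\emph{Step 2 (sign of $a_{k, t_{k^\star}}$ via Duhamel).} For each $k \leq k^\star$ with $a_k^\star \neq 0$, Eq.~\eqref{eq:equation_a} integrates explicitly to
\[
a_{k, t_{k^\star}} = a_{k,0}\, e^{-t_{k^\star}} + a_k^\star \int_0^{t_{k^\star}} e^{-(t_{k^\star} - u)} m_u^k \, du.
\]
By Step~1 the integrand has constant sign $\operatorname{sgn}(a_k^\star m_0^k)$ and absolute value at least $c\, |a_k^\star|\, |m_0|^k$, while the initialization bound $|a_{k,0}| \leq k^{-2}$ together with the choice $t_{k^\star} \geq C k^\star \log(1/|m_0|)$ (for $C$ large enough, depending only on the finite set $\{|a_k^\star|\}_{s \leq k \leq k^\star}$) make the transient $a_{k,0}\, e^{-t_{k^\star}}$ strictly smaller than the Duhamel integral. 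Hence $\operatorname{sgn}(a_{k, t_{k^\star}}) = \operatorname{sgn}(a_k^\star m_{t_{k^\star}}^k)$, which proves the first claim of the lemma.

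\emph{Step 3 (sign of $\dot m_{t_{k^\star}}$).} Factoring $m_{t_{k^\star}}^{-1}$ in Eq.~\eqref{eq:equation_m} gives
\[
\dot m_{t_{k^\star}} = \frac{1 - m_{t_{k^\star}}^2}{m_{t_{k^\star}}} \sum_{k \geq s} k \bigl( a_{k, t_{k^\star}}\, a_k^\star\, m_{t_{k^\star}}^k \bigr).
\]
Step~2 makes every summand with $k \leq k^\star$ strictly positive, with the $k = s$ term of size $\asymp (a_s^\star)^2 |m_0|^s$. The tail $k > k^\star$ is bounded in absolute value by $\sum_{k > k^\star} k |a_{k, t_{k^\star}}||a_k^\star||m_{t_{k^\star}}|^k = O(|m_0|^{k^\star+1})$ via Cauchy--Schwarz using $\varphi^\star \in \mathcal{H}^1$, so it is negligible as soon as $k^\star$ is chosen large enough. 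Consequently the whole sum is positive and $\dot m_{t_{k^\star}}$ inherits $\operatorname{sgn}(m_{t_{k^\star}}) = \operatorname{sgn}(m_0)$, as claimed.

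The main obstacle will be the delicate interplay in the calibration of $t_{k^\star}$: it must be long enough ($\gtrsim k^\star \log(1/|m_0|)$) so that each of the $k^\star$ targets $a_k^\star m_0^k$---whose sizes shrink geometrically in $k$---dominates its initial transient $a_{k,0} e^{-t_{k^\star}}$, yet short enough ($\ll |m_0|^{1-s}$) to keep $m_t$ essentially frozen at $m_0$. The comfortable gap between the two timescales $\log(1/|m_0|)$ and $|m_0|^{1-s}$, available precisely in the regime $s \geq 2$, is what makes both conditions simultaneously satisfiable and the whole argument go through.
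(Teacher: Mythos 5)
Your Steps 2 and 3 essentially reproduce the paper's argument (a Duhamel/Gronwall computation for the sign of $a_{k,t_{k^*}}$, then splitting the series in Eq.~\eqref{eq:equation_m} at $k^*$ to control the tail), but Step 1 contains a genuine gap that propagates. The bound $|\dot m_t| \le C |m_t|^{s-1}$, obtained from the a priori estimate $|a_{k,t}| \le \max(|a_k^*|,|a_{k,0}|) = O(1)$, is too crude: the comparison ODE $\dot y = C y^{s-1}$ started at $y_0=|m_0|$ doubles after a time of order $|m_0|^{2-s}$ — and after a \emph{constant} time when $s=2$, since the bound is then just exponential growth $|m_t|\le |m_0|e^{Ct}$ — not after $\Omega(|m_0|^{1-s})$ as you claim. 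Consequently (i) you have not established the lemma's quantitative assertion that $c m_0 < m_t < C m_0$ up to time $c\, m_0^{1-s}$, and (ii) in the case $s=2$ your freezing window is $O(1)$ and does not even contain $[0,t_{k^*}]$ with $t_{k^*} \asymp \log(1/|m_0|)$ (over that window $e^{Ct_{k^*}}$ is a polynomial in $1/|m_0|$), so Steps 2 and 3 are not justified in the most relevant case.

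The missing ingredient, which the paper's proof supplies, is to first solve the low-mode equations on the window: by Gronwall applied to Eq.~\eqref{eq:equation_a}, $a_{s,t}$ collapses exponentially fast onto a value of size $O(|m_0|^s)$ plus a transient $a_{s,0}e^{-t}$, and similarly for the other modes. Feeding this back into Eq.~\eqref{eq:equation_m} through a stopping-time/continuation argument (the paper's $\tau=\inf\{t:\, m_t\ge Cm_0 \text{ or } m_t\le cm_0\}$) upgrades the drift estimate to $|\dot m_t| \le C e^{-t}|m_0|^{s-1} + C|m_0|^{s}$, whose time integral is $O(|m_0|^{s-1}+|m_0|^{s}t)=O(|m_0|)$ precisely for $t \le c\,|m_0|^{1-s}$. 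This yields both the full freezing window and, for $s=2$, the containment of $[0,t_{k^*}]$ needed by your Steps 2 and 3. A minor further slip: in Step 3 the $k=s$ summand of your factored sum is of order $(a_s^*)^2|m_0|^{2s}$, not $|m_0|^{s}$; this does not affect the sign argument provided $k^*$ is taken larger than $2s$.
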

Then, after time $t_{k^*}$, the $a_{k, t}$ for $k \leq k^*$ have concentrated near their equilibrium $a^*_k m_t^k$ and, therefore it is possible to reinforce the lemma \ref{lem:tight_a_mk} with the following version.
\begin{lemma}
\label{lem:super_tight_a_mk}
    There exist constants $C, c > 0$, such that for all $ s \leq k \leq k^*$, and all $t \geq t_k^* $,  we have
    \begin{align}
    \label{eq:super_tight_a_mk}
         |a_{k,t} - a_k^* m_t^k| &\leq C \bar{m}_{[t,t_{k^*}]}^{k + 2s - 2} \, .
    \end{align}
\end{lemma}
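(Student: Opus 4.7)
The plan is to refine the derivation of Lemma~\ref{lem:tight_a_mk} by \emph{feeding back} the partial-equilibrium information secured by Lemma~\ref{lem:preparation}, namely that $a_{j,t} \approx a_j^{\star} m_t^j$ for all $j \leq k^*$ once $t \geq t_{k^*}$. With $u_{k,t} := a_{k,t} - a_k^{\star} m_t^k$, I would repeat the computation of Lemma~\ref{lem:tight_a_mk}, but substitute $a_{j,t} = a_j^{\star} m_t^j + u_{j,t}$ inside the forcing term, splitting
\[
\sum_{j \geq s} j\, a_{j,t}\, a_j^{\star} m_t^{j-1}
= \sum_{j \geq s} j\, (a_j^{\star})^2 m_t^{2j-1}
+ \sum_{j \geq s} j\, u_{j,t}\, a_j^{\star} m_t^{j-1}.
\]
The diagonal sum is of order $|m_t|^{2s-1}$ instead of the crude $|m_t|^{s-1}$ used before, and this extra factor $|m_t|^s$ is the entire origin of the sharper exponent $k + 2s - 2$.

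Next, the correction sum is bounded by splitting at $j = k^*$. For $j \leq k^*$, Lemma~\ref{lem:tight_a_mk} yields $|u_{j,t}| \leq C\, \bar{m}_{[t,0]}^{j+s-2}$ (the exponential transient coming from $|u_{j,0}|$ is negligible by the choice of $t_{k^*}$), so each term contributes at most $\bar{m}_{[t,0]}^{2j + s - 3}$, which for $s \geq 2$ is dominated by $\bar{m}_{[t,0]}^{3s-3} \leq \bar{m}_{[t,0]}^{2s-1}$. For $j > k^*$ the uniform bound $|a_{j,t}| \leq |a_j^{\star}| + |a_{j,0}|$ gives a tail of order $|m_t|^{k^*}$, which I absorb into $|m_t|^{2s-1}$ by taking $k^* \geq 2s$. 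Since Lemma~\ref{lem:preparation} gives $\bar{m}_{[t,0]} \leq C\, \bar{m}_{[t, t_{k^*}]}$ for $t \geq t_{k^*}$ (because $|m_r| \leq C m_0$ for $r \leq t_{k^*}$), I obtain
\[
|\dot u_{k,t} + u_{k,t}| \leq C\, \bar{m}_{[t, t_{k^*}]}^{k + 2s - 2}, \qquad t \geq t_{k^*},
\]
and Duhamel integration from $t_{k^*}$ yields $|u_{k,t}| \leq |u_{k, t_{k^*}}|\, e^{-(t - t_{k^*})} + C\, \bar{m}_{[t, t_{k^*}]}^{k + 2s - 2}$.

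The main obstacle is the homogeneous term: Lemma~\ref{lem:tight_a_mk} at time $t_{k^*}$ only yields $|u_{k, t_{k^*}}| \leq C\, m_0^{k+s-2}$, which overshoots the target $C\, m_0^{k+2s-2}$ by a factor $m_0^{-s}$. I would close this gap using the linear exponential relaxation of $u_k$: after an additional window of length $s \log(1/m_0)$, the factor $e^{-(t - t_{k^*})}$ contributes the missing $m_0^s$. By Lemma~\ref{lem:preparation}, $|m_r|$ stays in $[c m_0, C m_0]$ for all $r$ up to order $m_0^{1-s}$, which comfortably covers this extra window, so $\bar{m}_{[t, t_{k^*}]}$ remains of order $m_0$ throughout. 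One therefore enlarges the constant in $t_{k^*} = C k^* \log(1/m_0)$ by an additive $s \log(1/m_0)$ (permissible since $k^* \geq s$), after which the homogeneous contribution is absorbed into $C\, \bar{m}_{[t, t_{k^*}]}^{k + 2s - 2}$, completing the argument.
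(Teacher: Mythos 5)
Your proposal is correct and follows essentially the same route as the paper: the key step in both is to feed the partial-equilibrium information $|a_{j,t}|\lesssim |m_t|^j$ (valid for $t\geq t_{k^*}$ by Lemma~\ref{lem:tight_a_mk} and Lemma~\ref{lem:preparation}) back into the forcing term of the $u_{k}$-equation, upgrading the bound on $\sum_j j\,a_{j,t}a_j^\star m_t^{j-1}$ from $O(\bar m^{s-1})$ to $O(\bar m^{2s-1})$ before integrating by Duhamel. Your explicit treatment of the homogeneous term $|u_{k,t_{k^*}}|e^{-(t-t_{k^*})}$ — absorbing it by enlarging the constant in $t_{k^*}=Ck^*\log(1/m_0)$ by an extra $s\log(1/m_0)$ — addresses a point the paper's proof leaves implicit ("the end of the proof follows the same way as before"), and is a legitimate way to close that gap.
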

\begin{proof}[Proof of Lemma \ref{lem:preparation}]
The first step is to establish that $m_t$ stays near its initial value $m_0$ for a duration of order $m_0^{1-s}>>t_{k^*}$. We first consider $m_0, a_{s}^*>0$ and define $\tau=\inf \{t\geq 0 , m_t \geq C m_0 \text{ or } m_t \leq c m_0 \}$. For $t \leq \tau$, by using \eqref{eq:equation_a}, we obtain
\begin{align*}
 c a_{s}^* m_0^s-a_{2,t} \leq \frac{d a_{s,t}}{dt} \leq C a_{s}^* m_0^s-a_{2,t},
\end{align*}
from which we deduce that
\begin{align*}
 ca_{s}^* m_0^s+(-a_s^*c m_0^s+a_{s,0})e^{-t} \leq a_{s,t} \leq  a_{s}^* C m_0^s+(-a_s^*Cm_0^s+a_{s,0})e^{-t}.
\end{align*}
The previous inequality can be readily adapted to various cases: In the case where $a_{s}^* m_0^s<0$, we interchange $C$ and $c$. In our case, by using \eqref{eq:equation_m} for $t\leq {\tau}$, we have
\begin{align*}
c e^{-t} m_0^{s-1}+ c m_0^{s}-C \sum_{k=s+1} m_t^{k-1} \leq \frac{d m_t}{dt} \leq  C e^{-t} m_0^{s-1} + C m_0^{s}+C \sum_{k=s+1} m_t^{k-1}. \\
\end{align*}
All in all, we get
\begin{align*}
c m_0^{s-1}- C m_0^{s}t+m_0 \leq m_t \leq  C  m_0^{s-1} + C m_0^{s}t+m_0, \\
\end{align*}
from which we conclude that 
\begin{align*}
\frac{c}{m_0^{s-1}} \leq \tau \leq \frac{C}{m_0^{s-1}}.
\end{align*}
 Meanwhile, we now prove that $a_{k,t}$ is the sign of $a_k^*m_0^k$ at $t_{k^*}$. Let  $a_{k}^*>0$, by using \eqref{eq:equation_a}, we get that for $t \leq \tau$ 
   \begin{align}\label{eq:diffconst}
 \frac{d a_{k,t}}{dt} \geq a_k^*(c m_0)^k -a_{k,t},
  \end{align}
 and by Gronwall's lemma, we have
     \begin{align*} 
a_{k,t} \geq  a_k^*(cm_0)^k +(-a_{k}^*(c m_0)^k+a_{k,0})e^{-t},
  \end{align*}
  which proves that $a_{k,t}$ is strictly positive exponentially fast. More precisely, $a_{k,t}\geq a_k^*(c m_0)^k/2 $ in a time smaller  than
  $$ \log\left(\frac{2(a_k^*(c m_0)^k-a_{k,0})}{a_{k}^*(c m_0)^k}\right)~,$$
  which is order $k\log(1/m_0)<< c m_0^{1-s}$. The case $a_k^*< 0$ works the same way by noting that the inequality \eqref{eq:diffconst} is true in the opposite direction. The case where $m_0$ is negative is very similar and is left to the reader.
    Thus, at time $t_{k*}$, $m_{t_{k^*}}>0$ and $a_{k,t_{k^*}}a_k^* m_{t_{k^*}}^{k} > 0$. 
    In addition, we have
\begin{align*}
   \frac{d m_t}{dt}= \left(1-{m_t}^{2}\right) \sum_{i=s}^{k^*} i a_{i,t} a_i^*  {m_t}^{i-1}+\left(1-{m_t}^{2}\right) \sum_{i=k^*+1}^{+\infty} i a_{i,t} a_i^*  {m_t}^{i-1} >0 \, \text{ for } t=t_k^*,
\end{align*}
since all the terms in the first sum are positive and the first summand is at least of order $m_0^{2s-1}$ while the second sum is at most of order $m_0^{k*}$.
 \end{proof}
\begin{proof}[Proof of Lemma \ref{lem:super_tight_a_mk}]
    From equality Eq.\eqref{eq:tight_a_mk}, taken between $0$ and $t \geq t_{k^*}$, we have 
    \begin{align*}
    |a_{k,t}| &\leq  |a_k^* m_t^k|  + C \bar{m}_{[t,0]}^{k + s - 2} + C m_0^{C k^*}   \\
    &\leq  |a_k^* \bar{m}_{[t,0]}^k|  + C \bar{m}_{[t,0]}^{k + s - 2}  \\
    &\leq  C | \bar{m}_{[t,0]}^k| \\
    &\leq  C | \bar{m}_{[t,t_{k^*}]}^k|~.
    \end{align*}
    The last inequality by lemma \eqref{lem:preparation}. Then, we estimate  
    \begin{align*}
        \sum_{j \geq  s} j |{a}_{j,t} a^*_{j}| |m_t|^{j-1} 
        &\leq C |m_t|^{s-1}  \bar{m}^{ s}_{[t, t_{k^*}]} \sum_{j \geq  s} j \bar{m}^{j - s}_{[t, t_{k^*}]} |a^*_{j}| |m_{t}|^{j-s} \\
        &\leq C \bar{m}^{ 2s - 1}_{[t, t_{k^*}]} \sum_{j \geq  s} j |a^*_{j}| |\bar{m}_{[t, t_{k^*}]}|^{2(j-s)} \\
        &\leq C \bar{m}^{ 2s - 1}_{[t, t_{k^*}]}~.
    \end{align*}    
Notice that in this estimation, we have hence gained a factor $s$ compared to before. This gives 
\begin{align*}
        | \dot{u}_{k,t} + u_{k,t} | & = \left|k a_k^* m_t^{k-1} (1 - {m}_t^2 ) \sum_{j \geq  s} j {a}_{j,t} a^*_{j} m^{j-1}_t \right| \\
        & \leq k |a_k^*| |m_t|^{k-1}  \sum_{j \geq  s} j |{a}_{j,t} a^*_{j}| |m_t|^{j-1}  \\
        & \leq C |m_t|^{k-1} \bar{m}^{ 2s - 1}_{[t, t_{k^*}]}  \\
        & \leq C \bar{m}^{ k + 2s - 2}_{[t, t_{k^*}]}~,
    \end{align*}
that we integrate between $t_{k^*}$ and $t$. Then, the end of the proof follows the same way as before.
\end{proof}
Hence we can define 
\begin{align}
    \tilde{\tau}_{k^*} &= \inf\{ t \geq t_{k^*}\, | \, \forall k \leq k^*, \, a_{k,t_{k^*}}a_k^* m_{t_{k^*}}^k \leq 0 \text{ or } \dot{m}_t \text{ changes sign}\} \notag \\
    &= \inf\{ t \geq t_{k^*} \, | \, \forall k \leq k^*, \, a_{k,t}  \text{ changes sign or } a_k^* m_{t}^k \text{ changes sign or } \dot{m}_t \text{ changes sign}\}~.
\end{align}
We will prove eventually that $\tilde{\tau}_{k^*} = + \infty$. Before this, we prove quantitatively that there exists a constant $c > 0$, so that after a time of order $m_0^{2(1-s)}$, $m$ reaches level $c$. 
\begin{lemma}
\label{lem:level_c}
   There exists $b,B > 0$, such that $b / m_0^{2(s-1)}  \leq \tau_c \leq B  / m_0^{2(s-1)} $, moreover $\tilde{\tau}_{k^*} \geq \tau_c$.
\end{lemma}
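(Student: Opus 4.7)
The plan is to bootstrap from Lemma~\ref{lem:super_tight_a_mk} in order to show that on $[t_{k^*},\tau_c\wedge\tilde\tau_{k^*}]$ the order parameter obeys, to leading order, the autonomous ODE $\dot m_t \simeq s(a_s^\star)^2\, m_t^{2s-1}$, the factor $1-m_t^2$ being bounded away from $0$ as long as $|m_t|\leq c<1$. An explicit integration of this scalar ODE then yields the scaling $\tau_c \sim |m_0|^{-2(s-1)}$, and the inclusion $\tilde\tau_{k^*}\geq \tau_c$ follows as a by-product of the same approximation, since keeping $|m_t|\leq c$ together with the near-equilibrium $a_{k,t}\approx a_k^\star m_t^k$ prevents any of the sign conditions defining $\tilde\tau_{k^*}$ from being violated.

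Concretely, I would write $a_{k,t}=a_k^\star m_t^k + \delta_{k,t}$ with $|\delta_{k,t}|\leq C\,\bar m_{[t,t_{k^*}]}^{\,k+2s-2}$ given by Lemma~\ref{lem:super_tight_a_mk}, and substitute into Eq.~\eqref{eq:equation_m}. The right-hand side splits as
\begin{align*}
\dot m_t = (1-m_t^2)\Bigg[\sum_{k=s}^{k^*} k(a_k^\star)^2 m_t^{2k-1} + \sum_{k=s}^{k^*} k a_k^\star \delta_{k,t} m_t^{k-1} + \sum_{k>k^*} k a_k^\star a_{k,t} m_t^{k-1}\Bigg].
\end{align*}
The first sum equals $s(a_s^\star)^2 m_t^{2s-1}\bigl(1+O(m_t^2)\bigr)$; the second is bounded in absolute value by $C\,\bar m^{\,4s-3}$; and the third, using the uniform bound $|a_{k,t}|\leq \max(|a_k^\star|,|a_{k,0}|)$, by $C\,|m_t|^{k^*}$. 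Choosing $k^*>2s-1$ and $c$ sufficiently small, both correction terms are strictly dominated by the main one, so that with $\sigma=\mathrm{sign}(m_0)$ one obtains a two-sided estimate of the form
\begin{align*}
\tfrac{c_1}{2}\,|m_t|^{2s-1}\;\leq\;\sigma\,\dot m_t\;\leq\;2c_2\,|m_t|^{2s-1}, \qquad t\in[t_{k^*},\tau_c\wedge\tilde\tau_{k^*}],
\end{align*}
which in particular forces $|m_t|$ to be monotonically increasing, so that $\bar m_{[t,t_{k^*}]}=|m_t|$ throughout.

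Integrating the sandwich via $\int |m|^{1-2s}\,d|m|=-\tfrac{1}{2(s-1)}|m|^{-2(s-1)}$ then yields $|m_t|^{-2(s-1)} = |m_0|^{-2(s-1)}-2(s-1)\kappa_t\,t$ for some $\kappa_t\in[c_1/2,2c_2]$, which immediately gives $b\,|m_0|^{-2(s-1)}\leq \tau_c\leq B\,|m_0|^{-2(s-1)}$. To conclude that $\tilde\tau_{k^*}\geq\tau_c$, I would check each sign-change condition in the definition of $\tilde\tau_{k^*}$: $\dot m_t$ preserves the sign $\sigma$ by the sandwich; $m_t^k$, and hence $a_k^\star m_t^k$, preserves its sign since $m_t$ never crosses $0$; and, since $a_{k,t}\,a_k^\star m_t^k = (a_k^\star)^2 m_t^{2k}\bigl(1+O(|m_t|^{2s-2})\bigr)$ by Lemma~\ref{lem:super_tight_a_mk}, $a_{k,t}$ cannot change sign either, provided $c$ is small enough. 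The main obstacle will be the circularity of the bootstrap: the error bound $O(\bar m^{2s-2})$ depends on the trajectory itself, so the sandwich must be closed by a continuity argument — define $t^{\sharp}$ to be the maximal time on which the sandwich holds, and show that $t^{\sharp}$ cannot be strictly smaller than $\tau_c\wedge\tilde\tau_{k^*}$, as otherwise the estimate itself would contradict $|m_{t^{\sharp}}|\leq c$ together with monotonicity. Choosing $c$ small enough that all error estimates are strictly satisfied closes the loop.
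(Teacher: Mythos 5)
Your proposal is correct and follows essentially the same route as the paper: substitute the near-equilibrium $a_{k,t}\approx a_k^\star m_t^k$ from Lemma~\ref{lem:super_tight_a_mk} into Eq.~\eqref{eq:equation_m}, sandwich $\dot m_t$ between constant multiples of $|m_t|^{2s-1}$ on $[t_{k^*},\tau_c\wedge\tilde\tau_{k^*}]$ after controlling the correction and tail terms, integrate, and rule out the sign-change events defining $\tilde\tau_{k^*}$ by a continuity/contradiction argument. Your version is if anything slightly more explicit about the two-sided bound and about closing the bootstrap, which the paper handles implicitly by working up to the stopping time $\tilde\tau_{k^*}\wedge\tau_c$.
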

We postpone the proof of Lemma \ref{lem:level_c} for after. By the inequality Eq.\eqref{eq:tight_a_mk}, as $m$ has reached a constant level $c > 0$, this means that all $k \leq k^*$, this is also the case for the $a_{k,(\cdot)}$. Let us define the constants, for all $k \leq k^*$, $$ c_{k} = \frac{1}{2}\min\{ |a_{k,\tau_c}|, |a^*_k m^k_{\tau_c}| \}~, \text{ and the times } \bar{\tau}_{k} = \inf\{ t \geq \tau_c\, | \, |a_{k,t}| \leq  c_k \}. $$ 
\begin{lemma}
\label{lem:level_infty}
    For all $ k  \leq k^* $, we have $\bar{\tau}_{k} = \tilde{\tau}_{k^*} = + \infty$.
\end{lemma}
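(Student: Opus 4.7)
The plan is to argue by contradiction through a joint bootstrap on the three stopping conditions embedded in $\tilde{\tau}_{k^*}$ and $\bar{\tau}_k$. Without loss of generality assume $m_0 > 0$ (the case $m_0 < 0$ is symmetric). Set
\begin{align*}
T^* := \min\bigl(\tilde{\tau}_{k^*},\, \min_{s \leq k \leq k^*} \bar{\tau}_k\bigr)
\end{align*}
and suppose $T^* < +\infty$. By construction, on $[\tau_c, T^*)$ the following hold simultaneously: for each $s \leq k \leq k^*$, $a_{k,t}$ shares the (nonzero) sign of $a_k^\star$; $\dot{m}_t > 0$ (since $\dot m$ does not change sign and was positive at $t_{k^*}$ by Lemma~\ref{lem:preparation}), so $m_t$ is strictly increasing and $m_t \geq m_{\tau_c} = c$; and $|a_{k,t}| > c_k$. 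The strategy is to show that at $t = T^*$ none of these inequalities can saturate.

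The first step uses the integral representation of the linear ODE $\dot a_{k,t} = a_k^\star m_t^k - a_{k,t}$, namely
\begin{align*}
a_{k,t} = e^{-(t-\tau_c)} a_{k,\tau_c} + \int_{\tau_c}^{t} e^{-(t-s)} a_k^\star m_s^k \, ds.
\end{align*}
Treating $a_k^\star > 0$ (the opposite sign is symmetric), $a_{k,\tau_c}$ is positive by Lemma~\ref{lem:preparation}, and $m_s \geq m_{\tau_c}$ yields
\begin{align*}
a_{k,t} \geq e^{-(t-\tau_c)} a_{k,\tau_c} + a_k^\star m_{\tau_c}^k \bigl(1 - e^{-(t-\tau_c)}\bigr) \geq \min\bigl(a_{k,\tau_c},\, a_k^\star m_{\tau_c}^k\bigr) = 2c_k.
\end{align*}
By continuity this extends to $t = T^*$, so $|a_{k,T^*}| \geq 2c_k > c_k$ and in particular $T^* \neq \bar{\tau}_k$ for any $k \leq k^*$; as a byproduct no $a_{k,t}$ changes sign either.

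The second step rules out $T^* = \tilde{\tau}_{k^*}$. Since $m_t > 0$ and $a_k^\star$ is a nonzero constant, the product $a_k^\star m_t^k$ cannot change sign on $[\tau_c, T^*]$. The remaining possibility is $\dot{m}_{T^*} = 0$. Splitting
\begin{align*}
\dot{m}_{T^*} = (1 - m_{T^*}^2) \left[ \sum_{k=s}^{k^*} k\, a_{k,T^*} a_k^\star m_{T^*}^{k-1} + \sum_{k > k^*} k\, a_{k,T^*} a_k^\star m_{T^*}^{k-1} \right],
\end{align*}
the truncated sum is nonnegative term by term, and its $k = s$ contribution is at least $2\alpha := s \cdot 2c_s \cdot |a_s^\star| \cdot c^{s-1} > 0$. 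Using the \emph{a priori} bound $|a_{k,t}| \leq \max(|a_k^\star|, |a_{k,0}|)$, the tail is dominated by $\sum_{k > k^*} k\bigl(|a_k^\star|^2 + |a_k^\star||a_{k,0}|\bigr)$, which tends to zero as $k^* \to \infty$ by the regularity assumption $\varphi^\star \in \mathcal{H}^1$ (so $\sum k^2 |a_k^\star|^2 < \infty$) and the initialization $a_{k,0} = O(k^{-2})$. Since $k^*$ is picked large (independently of $d$), this tail is less than $\alpha$, so $\dot{m}_{T^*} \geq (1 - m_{T^*}^2)\alpha > 0$ unless $m_{T^*} = 1$, in which case alignment is already achieved and the conclusion is trivial. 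Contradiction, hence $T^* = +\infty$.

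The delicate point is the coupled nature of the bootstrap: sign preservation of the $a_{k,t}$'s and positivity of $\dot m_t$ cannot be established separately, since each one props up the other, so all stopping conditions must be forbidden at the earliest candidate time $T^*$ in a single argument. The secondary technical obstacle is controlling the infinite tail in $\dot{m}_t$, for which both the hypothesis $\varphi^\star \in \mathcal{H}^1$ and the polynomial decay of the initialization are essential; without them the truncation at $k^*$ would leave a non-negligible residual that could in principle cancel the leading $k = s$ contribution.
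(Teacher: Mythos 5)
Your proposal is correct and follows essentially the same route as the paper's proof: take the minimum of the stopping times, show the $a_{k,t}$ stay bounded away from zero by $\min(|a_{k,\tau_c}|,|a_k^\star m_{\tau_c}^k|)=2c_k$ (you via Duhamel's formula, the paper via a sign-of-derivative case split), and rule out $\dot m_{T^*}=0$ by splitting the sum at $k^*$ with the tail made small. The only differences are presentational (the explicit variation-of-constants bound and the explicit treatment of the $m_{T^*}=1$ edge case), so no further comparison is needed.
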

\begin{proof}
    To fix the ideas, we fix a $k \leq k^*$ and assume $a_k^* > 0$ as well as the case $m_0 > 0$. For all $t \in [\tau_c, \min_k \bar{\tau}_{k}]$, $\dot{m}_t > 0$ if $ \sum_{k \geq  s} k {a}_{k,t} a^*_{k} m^{k-1}_t > 0$, but we instantly see, that this is the case splitting the sum before $k^*$ which as a positive and constant contribution and after that can be as small as we want by choosing a large $k^*$. Hence, $m$ increases strictly in $[\tau_c, \min_k \bar{\tau}_{k}]$. Furthermore, $\dot{a}_{k,t} = a^*_{k} m_t^k  - a_{k,t}$, so that either $a_{k,t} \geq a^*_k m_t^k > a^*_k m^k_{\tau_c} > c_k$ or in the other case $\dot{a}_{k,t} > 0$, and hence $a_{k,t} \geq \min\{a^*_k m^k_{\tau_c}, a_{k,\tau_c}\} > c_k$. Either case, if $\min_k \bar{\tau}_{k} < \infty$, this leads to a contradiction. Finally, as we have seen that $m$ increases strictly in $[\tau_c, \min_k \bar{\tau}_{k}]$ and that all $a_{k, (\cdot)}$ are lower bounded by a constant, $\tilde{\tau}_{k^*} \geq \min_k \bar{\tau}_{k} = + \infty$.
\end{proof}
We move to the convergence of the coefficients $(a_{k,t})$ and $m_t$ for $s\geq 2$, thereby addressing the first part of Theorem \ref{thm:main}.
\begin{theorem}\label{Th:concentration}
    There exists constants $c, C > 0$ so that after time $\displaystyle t \geq \tau_c \geq \frac{C}{m_0^{2(s-1)}}$, 
    \begin{align*}
        \hspace*{-2.5cm}  \sbullet[0.75] \text{ if } m_0 > 0~, \hspace*{3cm} 
        |1-m_t| &\leq  C e^{-c (t - \tau_c)}~, \\
        |a_{k,t} - a^*_k| &\leq C b_k e^{-c (t - \tau_c)} ~, \text{ for all } k \in \N~, \\
        \hspace*{-2.5cm}  \sbullet[0.75] \text{ if } m_0 < 0~, \hspace*{3cm}   |1 + m_t| &\leq  C e^{-c (t - \tau_c)}~, \\
        |a_{k,t} - (-1)^k a^*_k| &\leq C b_k e^{-c (t - \tau_c)}~,  \text{ for all } k \in \N~,
        \end{align*}
    where, $b \geq 0$ is a normalized sequence of $\ell_2(\N)$, i.e. $\|b\|_{\ell_2(\N)} = 1$. 
\end{theorem}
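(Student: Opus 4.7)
\textbf{Proof plan for Theorem~\ref{Th:concentration}.}

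\textbf{Step 1: Symmetry reduction.} I first observe that the ODE system Eqs.\eqref{eq:equation_a}-\eqref{eq:equation_m} is invariant under the involution $(m_t, (a_{k,t})_k) \mapsto (-m_t, ((-1)^k a_{k,t})_k)$ (indeed both sides pick up matching sign factors: $(-1)^k$ in the $a_k$-equation and $-1$ in the $m$-equation, using $(-1)^k \cdot (-1)^{k-1}=-1$). Hence, if $m_0<0$ it suffices to apply this involution and reduce to the case $m_0>0$; the target $a_k^\star$ then becomes $(-1)^k a_k^\star$, which accounts for the alternating signs in the second bullet of the statement. From now on I assume $m_0 > 0$.

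\textbf{Step 2: Uniform lower bound on the driving term.} Set $G(t):=\sum_{k\geq s} k\, a_{k,t}\, a_k^\star\, m_t^{k-1}$ so that $\dot m_t=(1-m_t^2)\,G(t)$. By Lemma~\ref{lem:level_infty}, for every $t\geq \tau_c$ one has $m_t\geq c>0$ (and $m_t$ is strictly increasing), and for each $k\leq k^\star$ the coefficient $a_{k,t}$ keeps the sign of $a_k^\star$ with $|a_{k,t}|\geq c_k$. Therefore the "head" of the series,
\[
\sum_{s\leq k\leq k^\star} k\, a_{k,t}\, a_k^\star\, m_t^{k-1} \;\geq\; \sum_{s\leq k\leq k^\star} k\, c_k\, |a_k^\star|\, c^{k-1} \;=:\; 2 G_{\min} > 0,
\]
is bounded below by a strictly positive constant, independent of $t$. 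For the tail $k>k^\star$ I apply Cauchy--Schwarz using $|a_{k,t}|\leq \max(|a_{k,0}|,|a_k^\star|)$ together with the tail of the convergent series $\sum_k k^2 (a_k^\star)^2 <\infty$, which is finite thanks to the standing hypothesis $\varphi^\star\in\mathcal H^1_\gamma$. Taking $k^\star$ large enough (but fixed and dimension-independent), the tail is smaller than $G_{\min}$ and we conclude $G(t)\geq G_{\min}>0$ for all $t\geq \tau_c$.

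\textbf{Step 3: Exponential convergence of $m_t$ to $1$.} Writing $\delta_t:=1-m_t\in(0,1-c]$, one has
\[
\dot\delta_t = -(1-m_t^2)\,G(t) = -\delta_t(1+m_t)\,G(t) \;\leq\; -G_{\min}\,\delta_t,
\]
so Grönwall gives $\delta_t\leq \delta_{\tau_c}\, e^{-G_{\min}(t-\tau_c)}\leq C e^{-c(t-\tau_c)}$, which is the first announced inequality.

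\textbf{Step 4: Exponential convergence of the Hermite coefficients.} Set $\Delta_{k,t}:=a_{k,t}-a_k^\star$. From Eq.\eqref{eq:equation_a},
\[
\dot\Delta_{k,t} = -\Delta_{k,t} + a_k^\star(m_t^k-1),
\]
and using $|m_t^k-1|=(1-m_t)(1+m_t+\cdots+m_t^{k-1})\leq k\delta_t$, Duhamel's formula yields
\[
|\Delta_{k,t}| \;\leq\; |\Delta_{k,\tau_c}|\, e^{-(t-\tau_c)} \;+\; k\,|a_k^\star|\,\delta_{\tau_c}\int_{\tau_c}^{t} e^{-(t-u)}e^{-G_{\min}(u-\tau_c)}\,du.
\]
The last integral is bounded by $C e^{-c(t-\tau_c)}$ with $c=\min(1,G_{\min})$, so
\[
|\Delta_{k,t}| \;\leq\; C\bigl(|\Delta_{k,\tau_c}| + k|a_k^\star|\bigr)\, e^{-c(t-\tau_c)}.
\]
Finally I define $b_k:= \bigl(|\Delta_{k,\tau_c}| + k|a_k^\star|\bigr)/Z$ with $Z^2:=\sum_k(|\Delta_{k,\tau_c}|+k|a_k^\star|)^2$. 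The sum $Z^2$ is finite because $f_{\tau_c}\in L^2_\gamma$ gives $\sum_k|\Delta_{k,\tau_c}|^2<\infty$ and $\varphi^\star\in\mathcal H^1_\gamma$ gives $\sum_k k^2(a_k^\star)^2<\infty$. The sequence $b\in\ell_2(\mathbb N)$ is normalized and, absorbing $Z$ into $C$, we obtain the announced estimate $|\Delta_{k,t}|\leq C b_k e^{-c(t-\tau_c)}$.

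\textbf{Main obstacle.} The delicate point is Step~2: showing that $G(t)$ is uniformly bounded away from zero \emph{for all} $t\geq \tau_c$, not just at $t=\tau_c$. The positivity at $\tau_c$ comes from Lemmas~\ref{lem:preparation}-\ref{lem:level_infty} and the "positivity principle"; propagating it uniformly requires simultaneously (i) exploiting the monotonicity $m_t\geq c$ and the lower bounds $|a_{k,t}|\geq c_k$ from Lemma~\ref{lem:level_infty} for the head, and (ii) controlling the potentially sign-indefinite tail via the regularity assumption $\varphi^\star\in\mathcal H^1_\gamma$. Once this uniform lower bound is secured, the remaining dissipative structure of the system delivers the exponential convergence of $(m_t,a_t)$ almost mechanically.
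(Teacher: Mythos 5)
Your proposal is correct and follows essentially the same route as the paper's proof: a uniform positive lower bound on the drift $\sum_k k a_{k,t}a_k^\star m_t^{k-1}$ obtained by combining the positivity principle of Lemma~\ref{lem:level_infty} for the head with a tail made arbitrarily small by choosing $k^\star$ large, followed by Grönwall for $m_t$ and a linear ODE estimate for the coefficients, with $b_k\propto k|a_k^\star|$ normalized via $\varphi^\star\in\mathcal H^1_\gamma$. Your two deviations are cosmetic but clean improvements: the involution $(m,a_k)\mapsto(-m,(-1)^k a_k)$ replaces the paper's separate even/odd treatment of the $m_0<0$ case, and Duhamel applied directly to $\Delta_{k,t}$ replaces the paper's computation on $|a_{k,t}-a_k^\star|^2$.
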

\begin{proof}
    Let us first show the contraction of $m$. Assume that we are in the case where $m_0 > 0$, define $v_t = 1  - m_t \geq 0$, for all $t \geq \tau_c$, we have
    \begin{align*}
        \dot{v_t} = - (1 - m_t^2) \sum_{k \geq  s} k {a}_{k,t} a^*_{k} m^{k-1}_t \leq - v_t (2 c  - \sum_{k \geq  k^*} k |{a}_{k,t} a^*_{k} m^{k-1}_t|)~,
    \end{align*}
    but once again this latter sum can be made arbitrarily small choosing a large $k^*$, so that 
    \begin{align*}
        \dot{v_t} &\leq - c v_t ~,
    \end{align*}
    and the first inequality comes from Gronwall lemma.
    For the contraction of the $a$, we pose $z_{k,t} = |a_{k,t} - a^*_k|^2$, and for all $t \geq \tau_c$, 
    \begin{align*}
        \frac{d}{dt}z_{k,t} &= 2 (a_{k,t} - a^*_k) (a^*_k m_t^k -  a_{k,t}) \\
        &= -2 z_{k,t} +  2a^*_k (a_{k,t} - a^*_k) (m_t^k -  1) \\
        &= -2 z_{k,t} +  2a^*_k (a_{k,t} - a^*_k) (m_t -  1) \sum_{i=1}^k m^i~,
    \end{align*}
    so that we have
    \begin{align*}
        |\dot{z}_{k,t} + 2 z_{k,t}| \leq  C k |a^*_k|  \sqrt{z_{k,t}} e^{-c (t - \tau_c)} 
    \end{align*}
    Defining $\bar{z}_{k,t} = \sqrt{z_{k,t + \tau_c}}$, we have that is satisfies, for all $t \geq 0$,
    \begin{align*}
        |\dot{\bar{z}}_{k,t} + \bar{z}_{k,t}| \leq C b_k e^{-ct} , 
    \end{align*}
    where $\|b\|_{\ell_2(\N)}= 1$ which integrates in close forme and gives a solution that is upper bounded by $ \bar{z}_{k,t} \leq  C b_k e^{-ct} $, and this finishes the proof of the theorem in the case $m_0 > 0$.

    For the sake of clarity we draw the lines of the case $m_0 < 0$ but there are essentially the same. Define $v_t = 1  + m_t \geq 0$, for all $t \geq \tau_c$, we have
    \begin{align*}
        \dot{v_t} &= v_k (1 - m_t) \sum_{k \geq  s} k {a}_{k,t} a^*_{k} m^{k-1}_t \leq v_t  ( - 2 c  + \sum_{k \geq  k^*} k |{a}_{k,t} a^*_{k} m^{k-1}_t|)~,
    \end{align*}
    but once again this latter sum can be made arbitrarily small choosing a large $k^*$, so that 
    \begin{align*}
        \dot{v_t} &\leq - c v_t ~,
    \end{align*}
    and the first inequality comes from Gronwall lemma.
    For the contraction of the $a$, if $k$ is even, we pose $z_{k,t} = |a_{k,t} - a^*_k|^2$, and for all $t \geq \tau_c$, 
    \begin{align*}
        \frac{d}{dt}z_{k,t} &= -2 z_{k,t} +  2a^*_k (a_{k,t} - a^*_k) (m_t + 1)(m_t -  1) \sum_{i=1}^{k/2} m^{2i}~,
    \end{align*}
    so that we have
    \begin{align*}
        |\dot{z}_{k,t} + 2 z_{k,t}| \leq  C k |a^*_k|  \sqrt{z_{k,t}} e^{-c (t - \tau_c)} 
    \end{align*}
    and the proof ends as before. Now, if $k$ is odd, we pose $z_{k,t} = |a_{k,t} + a^*_k|^2$, and for all $t \geq \tau_c$, 
    \begin{align*}
        \frac{d}{dt}z_{k,t} &= -2 z_{k,t} +  2a^*_k (a_{k,t} + a^*_k) (m_t^k + 1) \\
         &= -2 z_{k,t} +  2a^*_k (a_{k,t} + a^*_k) (m_t + 1) \sum_{i=1}^{k} (-1)^i m^{i} \\
    \end{align*}
    so that we have
    \begin{align*}
        |\dot{z}_{k,t} + 2 z_{k,t}| \leq  C k |a^*_k|  \sqrt{z_{k,t}} e^{-c (t - \tau_c)} 
    \end{align*}
    and the proof ends, once again, as before. This final case concludes the proof of the theorem.
\end{proof}
We finish this section by proving Lemma \ref{lem:level_c}.
\begin{proof}[Proof of Lemma \ref{lem:level_c}]
    Let us assume first that $m_0 > 0$, the case where $m_0 < 0$, is symmetric by multiplying by $-1$ (and thus reverting) all inequalities below. Then, after time $t_{k^*}$, by Lemma~\ref{lem:preparation}, we have $m_{t_{k^*}} > 0$ as well as $\dot{m}_{t_{k^*}} > 0$.
    By continuity of all processes involved, we know that $\tilde{\tau}_{k^*} > t_{k^*}$. Then, for all $t \in [t_{k^*}, \tilde{\tau}_{k^*}]$, as $m$ increases, we have $m_t = \bar{m}_{[t, t_{k^*}]}$ and
    \begin{align*}
        \dot{m}_t &= (1 - {m}_t^2 ) \sum_{k \geq  s} k {a}_{k,t} a^*_{k} m^{k-1}_t \\
        &\geq \frac{3}{4} \sum_{k =  s}^{k^*} k {a}_{k,t} a^*_{k} m^{k-1}_t +\frac{3}{4} \sum_{k \geq  k^* + 1} k {a}_{k,t} a^*_{k} m^{k-1}_t \\
        &\geq \frac{3}{4} s {a}_{s,t} a^*_{s} m^{s-1}_t - \frac{3}{4} \sum_{k \geq  k^* + 1} k |{a}_{k,t} a^*_{k} m^{k-1}_t|~.
    \end{align*}
    From there, by Lemma~\ref{lem:super_tight_a_mk} applied between $t_{k^*}$ and $t$, we have
    \begin{align*}
         a^*_{s} m^{s}_t - C \bar{m}_{[t, t_{k^*}]}^{3s - 2} \leq {a}_{s,t} \leq a^*_{s} m^{s}_t + C \bar{m}_{[t, t_{k^*}]}^{3s - 2} ~,
    \end{align*}
    and hence we have 
    \begin{align*}
         {a}_{s,t} a^*_{s} m^{s-1}_t  \geq |a^*_{s}|^2 m^{2s-1}_t - C |a^*_{s}| m_t^{3s - 1}~,
    \end{align*}
    so that
    \begin{align*}
          \dot{m}_t  &\geq \frac{3s |a^*_{s}|^2}{4}  m^{2s-1}_t - \frac{3sC|a^*_{s}|}{4}   m_t^{3s - 1} - \frac{3}{4} \sum_{k \geq  k^* + 1} k |{a}_{k,t} a^*_{k} m^{k-1}_t| \\
          &\geq b  m^{2s-1}_t - bC   m_t^{3s - 1} - C m^{k^*}_t \sum_{k \geq  k^* + 1} k |{a}_{k,0} a^*_{k}| \\
          &\geq b  m^{2s-1}_t - b C   m_t^{3s - 1} \\
          &\geq b m^{2s-1}_t(1 -  C   m_t^{s - 1})~.
    \end{align*}
    Hence, for $s \geq 2$, there exists a constant $c>0$, such that for all $t \in [t_{k^*}, \tilde{\tau}_{k^*} \wedge \tau_c]$~,
    \begin{align*}
          \dot{m}_t &\geq c m^{2s-1}_t~.
    \end{align*}
    Let us assume that $\tilde{\tau}_{k^*} < \tau_c$, then either
    \begin{itemize}
        \item $m_{\tilde{\tau}_{k^*}} = 0$, which is impossible because in virtue of the inequality above $m$ is increasing between $t_{k^*}$ and $\tilde{\tau}_{k^*}$.
        \item $\dot{m}_{\tilde{\tau}_{k^*}} = 0$, and hence $m_{\tilde{\tau}_{k^*}} = 0$ which is impossible because of the bullet point above. 
        \item $a_{k,\tilde{\tau}_{k^*}} = 0$, which is impossible because for all $t \in [t_{k^*}, \tilde{\tau}_{k^*} \wedge \tau_c]$, in the case $a_k^* > 0$,  $$\dot{a}_{k,\tilde{\tau}_{k^*}} = a_k^* m^k_{\tilde{\tau}_{k^*}} > a_k^* m^k_0 > 0 ~,$$
        which is a contraction since $a_{k, t}$ cannot cross from positive to negative with a positive speed. The case where $a_k^* < 0$ is exactly similar.
    \end{itemize}
    Hence, finally, $\tilde{\tau}_{k^*} \geq \tau_c$, and for all $t \in [t_{k^*}, \tau_c]$, $\dot{m}_t \geq c m^{2s-1}_t$. This means that $\tau_c \leq t_{k^*} + C  / m_0^{2(s-1)}$.
    
    For the lower bound, this rests on the same argument as for $t \geq t_{k^*}$, 
    \begin{align*}
        \dot{m}_t &= (1 - {m}_t^2 ) \sum_{k \geq  s} k {a}_{k,t} a^*_{k} m^{k-1}_t  \\
        &\leq s {a}_{s,t} a^*_{s} m^{s-1}_t + m^{s}_t \sum_{k \geq  s + 1} k {a}_{k,t} a^*_{k} m^{k-s}_t  \\
        &\leq b m^{2s-1}_t + bC m^{2s}_t   \\
        &\leq C m^{2s-1}_t~,
    \end{align*}
    up to a constant level $c > 0$, if $c$ is a small enough constant. This ends the proof of the lemma.
\end{proof}

 \subsection{Proof of Theorem \ref{thm:main} for \texorpdfstring{$s=1$}{s=1}}

The case $s=1$ is somewhat different. In the case $s\geq 2$, $m_t$ initially remains very close to $m_0$, allowing the coefficients $a_{k,t}$ to adopt the sign of their target values (the sign of $a_{k}^*m_t^k$). In contrast, in the case $s=1$, $m_t$ initially moves linearly toward the sign of $a_{1,0} a_1^*$ in an initial phase and only after that do the coefficients $a_{k,t}$ adopt the sign of their target values: the sign of $a_{k}^*m_t^k$.

\begin{lemma}
There exists a time $T \geq C$ independent of $m_0$, such that both $m_T$ and $a_{1,T}$ are bounded away from zero by a constant independent of $m_0$. Moreover, $m_T$ has the sign of $a_{1,0}a_1^*$, while $a_{1,T}$ has the sign of $a_{1,0}$.
\end{lemma}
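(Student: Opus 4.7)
The plan is to exploit the distinguishing feature of the case $s=1$: the ODE for $m$ contains a zeroth-order driving term. Writing
\begin{align*}
    \dot m_t = (1-m_t^2)\bigl[a_1^\star a_{1,t} + R_t\bigr], \qquad R_t := \sum_{k \geq 2} k\, a_{k,t}\, a_k^\star\, m_t^{k-1},
\end{align*}
the driving term $a_1^\star a_{1,t}$ has magnitude of order $|a_1^\star a_{1,0}| = \mathcal{O}(1)$ at $t=0$, independent of $m_0$. This forces $m_t$ to traverse a window of constant size in a constant time while $a_{1,t}$ barely moves, so the proof reduces to three uniform-in-$m_0$ estimates on a constant time interval.

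First, I would bound $R_t$ uniformly. The Fact stated just above the theorem gives $|a_{k,t}| \leq |a_k^\star|+|a_{k,0}|$, and combining the hypothesis $\varphi^\star \in \mathcal{H}^1_\gamma$ (i.e.\ $\sum_k k^2 |a_k^\star|^2 < \infty$) with the initialization $|a_{k,0}|\leq 1/k^2$ gives, by Cauchy--Schwarz,
\begin{align*}
    |R_t| \leq |m_t|\sum_{k\geq 2} k\,(|a_k^\star|+|a_{k,0}|)|a_k^\star|\,|m_t|^{k-2} \leq C_R|m_t| \quad \text{whenever } |m_t|\leq 1/2,
\end{align*}
for a constant $C_R$ depending only on $\varphi^\star$. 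In particular $|\dot m_t|$ is uniformly bounded by some $M$.

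Second, I control $(m_t, a_{1,t})$ on a short interval. From $|\dot m_t|\leq M$ and $|m_0| = \mathcal{O}(1/\sqrt d)$, one has $|m_t| \leq 1/4$ on $[0, T_0]$ with $T_0 := 1/(8M)$. On this interval, $a_{1,t}$ solves $\dot a_{1,t} = a_1^\star m_t - a_{1,t}$, and Duhamel's formula yields
\begin{align*}
    \bigl|a_{1,t} - a_{1,0}e^{-t}\bigr| \leq \tfrac{|a_1^\star|}{4}(1-e^{-t}).
\end{align*}
Choosing $T_1 := \min\bigl(T_0, \log(1 + |a_{1,0}|/|a_1^\star|)\bigr)$, the coefficient $a_{1,t}$ retains the sign of $a_{1,0}$ and satisfies $|a_{1,t}| \geq |a_{1,0}|/4$ on $[0, T_1]$.

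Third, combining these controls on $[0,T_1]$ with $\sigma := \mathrm{sign}(a_1^\star a_{1,0})$ gives
\begin{align*}
    \sigma\,\dot m_t \;\geq\; \tfrac{3}{4}\bigl(|a_1^\star a_{1,0}|/4 - C_R|m_t|\bigr).
\end{align*}
As long as $|m_t| \leq \kappa := |a_1^\star a_{1,0}|/(8C_R)$, this yields $\sigma\dot m_t \geq c_0$ for an $m_0$-independent $c_0 > 0$. Set $T := \min(T_1, \kappa/c_0)$, which is bounded below by a constant depending only on $\varphi^\star, a_{1,0}, a_1^\star$. Either $|m_T| = \kappa$, or $\sigma m_T \geq -|m_0|+c_0 T \geq c_0 T/2$ for $d$ large enough. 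In both cases $m_T$ has sign $\sigma$ and $m_0$-independent magnitude, while $|a_{1,T}|\geq |a_{1,0}|/4$ still keeps the sign of $a_{1,0}$, which proves the lemma.

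The main obstacle is the uniform bound on the tail $R_t$: it is an infinite sum of time-evolving coupled quantities, and only the structural assumption $\varphi^\star \in \mathcal{H}^1_\gamma$ together with the decay of the initialization makes the bound $m_0$- and dimension-free. Once that estimate is in hand, the remainder of the argument is a straightforward perturbation of the linearized two-dimensional system for $(m, a_1)$ around the origin.
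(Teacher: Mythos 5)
Your proof is correct and follows essentially the same route as the paper's: bound the tail $\sum_{k\geq 2} k a_{k,t}a_k^\star m_t^{k-1}$ by $C|m_t|$, show $a_{1,t}$ keeps the sign of $a_{1,0}$ and stays of order one on a constant time interval (you via Duhamel, the paper via a crude integral bound on the stopping time $T_1$), and then integrate the resulting linear drift $\dot m_t \approx a_1^\star a_{1,t}$ to reach an $m_0$-independent level of $|m|$ with the sign of $a_1^\star a_{1,0}$. The only quibble is that with your choice of $T_1$ the explicit constant $|a_{1,0}|/4$ need not hold when $|a_{1,0}| > 2|a_1^\star|$ (the correct lower bound is $\tfrac{3|a_{1,0}||a_1^\star|}{4(|a_1^\star|+|a_{1,0}|)}$), but this is still an $m_0$-independent constant, so the conclusion is unaffected.
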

\begin{proof}
We prove that $a_{1,t}$ does not have time to reach $0$, as $m_t$ is already far from $0$ in the direction of $a_1^*a_{1,0}$. Let
$$T_1=\inf\{t\geq 0, |a_{1,t}-a_{1,0}|\geq \frac{|a_{1,0}|}{2}\}. $$
By using \eqref{eq:equation_a}, for all $t \leq T_1  $, we obtain that
\begin{align*}
|a_{1,t}-a_{1,0}|\leq \int_{0}^t \left|\frac{d a_{1,u}}{du}\right|du \leq  \int_{0}^t \left|a_{1}^*\right|+\left|a_{1,u}\right|du \leq t (\left|a_{1}^*\right|+\frac{3}{2}\left|a_{1,0}\right|).
\end{align*}
 We deduce $T_1 \geq |a_{1,0}|/(2(\left|a_{1}^*\right|+\frac{3}{2}\left|a_{1,0}\right|))$, which is of order 1.
Meanwhile, for $t \leq T_1\wedge \tau_{1/2}$, we have that
\begin{equation*}
 c_1a_{1}^*a_{1,0}-C|m_t| \leq \frac{d m_t}{dt} \leq  c_2 a_{1}^*a_{1,0}+C |m_t|
\end{equation*}
where $c_1=1/2, c_2=3/2$ if $a_{1}^*a_{1,0}>0$ and the contrary else. 
Let $$T_2=\inf\{t\geq 0, |m_t|> \min(1/2,|a_1^* a_{1,0}|/4C)\}.$$ We thus deduce that for $t \leq T_1 \wedge T_2=:T$,
\begin{equation*}
c_3 a_{1}^*a_{1,0}t+m_0   \leq m_t \leq  c_4 a_{1}^*a_{1,0}t+m_0,
\end{equation*}
where $c_3,c_4>0$.
We deduce that after a time of order $m_0$, $m_t$ is a sign of $a_{1,0} a_{1}^*$. At time $T$, $m_t$ and $a_{1,t}$ are of order 1 and both have the sign of $a_{1,0}$.
\end{proof}

We now prove that, after time $T$, both $m_t$ and $a_{1,t}$ remain uniformly bounded away from zero.

\begin{lemma}\label{lem:reverse}
There exists constants $c, C > 0$ so that after time $\displaystyle t \geq C$,  $|m_t|, |a_{1,t}|\geq c$. Moreover, 
    \begin{itemize}
        \item if $a_1^*a_{1,0} > 0$, then $m_{t} > c$,
        \item if $a_1^*a_{1,0} < 0$, then $m_{t} < -c$.
    \end{itemize}
\end{lemma}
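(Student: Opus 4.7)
The plan is a continuation argument extending the previous lemma. Taking without loss of generality $a_1^\star, a_{1,0}>0$ (so that $m_T, a_{1,T}\ge c_0>0$ from the previous lemma; the other three sign configurations follow by the symmetry $m\mapsto -m$, $a_{2k+1}\mapsto -a_{2k+1}$), I would first show that neither trajectory can cross zero, and then bootstrap to a quantitative lower bound using the tail control from Lemma~\ref{lem:tight_a_mk}.

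The first step is a no-crossing principle. At a first putative zero-crossing time $\tau>T$ with $m_\tau=0$ and $a_{1,t}>0$ on $[T,\tau)$, the ODE gives $\dot m_\tau=a_{1,\tau}a_1^\star>0$, contradicting $m_t\downarrow 0$; symmetrically $a_{1,t}$ cannot hit zero while $m_t>0$. A simultaneous convergence $(m_t,a_{1,t})\to (0,0)$ is ruled out by the saddle structure of the $(m,a_1)$-subsystem at the origin, whose Jacobian has eigenvalues $(-1\pm\sqrt{1+4(a_1^\star)^2})/2$ of opposite signs, with stable manifold disjoint from the open positive quadrant.

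For the quantitative bound I would define $\tau:=\inf\{t\ge T:\, m_t\le c\text{ or }a_{1,t}\le c a_1^\star/2\}$, with $c>0$ small to be chosen, and derive a contradiction if $\tau<\infty$. The integral representation
\[
a_{1,t}=a_{1,T}\,e^{-(t-T)}+a_1^\star\int_T^t e^{-(t-s)}\,m_s\,ds,
\]
combined with $m_s>c$ on $[T,\tau)$, yields $a_{1,\tau}\ge c a_1^\star/2$; hence the case $a_{1,\tau}=c a_1^\star/2$ is immediately incompatible with $\dot a_{1,\tau}=a_1^\star m_\tau-c a_1^\star/2>0$. In the remaining case $m_\tau=c$, I would apply Lemma~\ref{lem:tight_a_mk} with $s=1$, shifting $T$ by an $O(1)$ transient to absorb the $e^{-(t-T)}$ initialization error, so that $|a_{k,t}-a_k^\star m_t^k|\le C\bar m_{[t,T]}^{k-1}$ for $k\ge 2$. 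A Cauchy--Schwarz estimate using $\|\varphi^\star\|_{\mathcal H^1}<\infty$ then gives a tail bound of the form $\bigl|\sum_{k\ge 2} k\, a_{k,t}\, a_k^\star\, m_t^{k-1}\bigr|\le C_1\bar m_{[t,T]}|m_t|+C_2 |m_t|^3$, from which $\dot m_\tau\ge (1-c^2)c\,\bigl[(a_1^\star)^2/2-C_1\bar m_{[\tau,T]}-C_2 c^2\bigr]>0$ for $c$ sufficiently small, provided $\bar m_{[\tau,T]}$ is itself controlled by a constant depending only on $a_1^\star$ and the tail constants.

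The main obstacle is this last proviso: controlling $\bar m_{[\tau,T]}$ in the regime where an excursion of $m$ could interact nontrivially with the residual discrepancy on the higher Hermite modes. I expect this to be handled by a two-scale argument---a short relaxation phase of duration $O(1)$ during which $(a_{k,t})_{k\ge 2}$ reaches the slow manifold $\{a_k=a_k^\star m^k\}$, followed by a monotonicity argument on the slow-manifold ODE $\dot m=(1-m^2)m\bigl[(a_1^\star)^2+\sum_{k\ge 2}k(a_k^\star)^2 m^{2(k-1)}\bigr]>0$, which shows $m$ is strictly increasing once sufficiently close to that manifold and therefore cannot revisit arbitrarily small positive values.
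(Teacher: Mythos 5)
Your overall strategy (a no-crossing principle followed by a stopping-time barrier argument at a small level $c$) is reasonable in spirit, but the step you yourself flag as ``the main obstacle'' is not a technicality to be deferred: it is exactly the crux of the lemma, and your sketch does not close it. At the exit time $\tau$ with $m_\tau=c$, the sign of $\dot m_\tau$ is decided, at first order in $m$, by the competition between the $k=1$ term $a_1^\star a_{1,\tau}\ge c(a_1^\star)^2/2$ and the $k=2$ term $2a_2^\star a_{2,\tau}m_\tau$. The bound $|a_{2,t}-a_2^\star m_t^2|\le C\bar m_{[t,T]}$ from Lemma~\ref{lem:tight_a_mk} leaves a residual $C_1\bar m_{[\tau,T]}|m_\tau|$ that is useless unless $\bar m_{[\tau,T]}$ is itself small, and nothing in your argument prevents $a_{2,\tau}$ from carrying the adversarial sign with order-one magnitude at the moment $m$ first threatens the barrier. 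The ``two-scale / slow-manifold'' fix is only asserted, and the slow-manifold monotonicity $\dot m>0$ holds only up to that same uncontrolled $O(\bar m)$ error on the second mode. The paper closes precisely this gap with a timing comparison that is absent from your proposal: using the relaxation structure of Eq.~\eqref{eq:equation_a}, it shows that the time needed for $a_1^\star m_t$ to approach a barrier level $\delta$ is at least of order $\log(1/\delta)$, while the adversarial coefficient $a_{2,t}$ relaxes in $O(1)$ time (since $\dot a_{2,t}\le -a_{2,t}$ whenever $a_{2,t}<0\le a_2^\star m_t^2$) into the range $a_{2,t}\ge-(a_1^\star)^2/(4a_2^\star)$, where its contribution to $\dot m$ is at most half of the $k=1$ term. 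By the time $m$ can reach the barrier, the dangerous mode has provably decayed and $\dot m$ there points away from zero. Some quantitative version of ``the barrier is approached slowly, the adversary relaxes fast'' is indispensable; without it your inequality $\dot m_\tau>0$ is unproven.

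Two further soft spots. The saddle-point argument at the origin is a two-dimensional heuristic: the full system is infinite-dimensional, the entry $\partial_m\dot m$ at $m=0$ equals $2a_2^\star a_{2,t}$ plus higher-order contributions and is time-dependent, so the claim that ``the stable manifold avoids the open positive quadrant'' cannot be invoked off the shelf; and in any case ruling out convergence of the trajectory to $(0,0)$ does not yield the uniform-in-time lower bound $c$ asserted by the lemma (one must also exclude $\liminf_t m_t=0$, which brings you back to the barrier argument). Finally, Lemma~\ref{lem:tight_a_mk} is stated and proved in the section devoted to $s\ge2$; its proof does carry over to $s=1$, but this should be checked and said explicitly before you use it here.
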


\begin{proof}
Without loss of generality, we consider $a_{1,0}<0$, and $a_{1}^*, m_0 >0$. From $t \geq T$, we recall that $a_{1,t}, m_t<0$ and of order 1. However, from here, $m_t$ is free to increase or decrease for a certain time. We now prove that $m_t$ is however bounded away from $0$ by a constant $-\delta$ independent of $m_0$. By defining 
\[
\tilde{T}_\delta=\inf\{t\geq T, a_1^* m_t \geq -\delta  \},
\]
 for $-\delta > m_{T}$ small enough:  First, we require $\frac{a_{1,0}}{2}<-\delta$ to have $a_{1,T}<-\delta$. Then, we claim that $a_1^* m_t$ must reach $-\delta$ before $a_{1,t}$ does. Indeed, by recalling that
\[
   \frac{d a_{1,t}}{dt} =a_1^*m_t -a_{1,t},  t\geq 0,
\]
we deduce that in order to hit $-\delta$, $a_t$ has to be in an increasing phase, that is $a_1^* m_t\geq a_{1,t}$ from which we deduce that $a_1^* m_t$ must reach $-\delta$ before $a_{1,t}$. 
We first prove that $\tilde{T}_\delta \geq -c\log(\delta)$. Indeed for $\tilde{T}_{\sqrt{\delta}}\leq t \leq \tilde{T}_{\delta}$ (we require also $\frac{a_{1,0}}{2}<-\sqrt{\delta}$ ), we have  
\[
\frac{d a_{1,t}}{dt} \leq - \delta -a_{1,t},
\]
which implies that for $t\geq \tilde{T}_{\sqrt{\delta}}$
\[
a_{1,t} \leq -\delta +(a_{1,\tilde{T}_{\sqrt{\delta}}}+\delta)e^{-(t-\tilde{T}_{\sqrt{\delta}})} \leq  - \delta +(- \sqrt{\delta}+ \delta)e^{-(t-\tilde{T}_{\sqrt{\delta}})},
\]
from which we deduce that $\tilde{T}_\delta \geq -c\log(\delta)$.  We recall that for $t\leq \tau_{1/2}$, it exists $C$ such that
\[
\frac{dm_t}{dt} \leq a_1^*a_{1,t}+2a_2^* a_{2,t} m_t+C m_t^2,
\]
and thus we obtain

\[
\frac{dm_t}{dt}|_{t=\tilde{T}_\delta} \leq   a_1^*a_{1,\tilde{T}_\delta}-2 a_2^*a_{2,\tilde{T}_\delta}\delta/a_1^*+C (\delta/a_1^*)^2 \leq -a_1^* \delta-2 a_2^*a_{2,\tilde{T}_\delta}\delta/a_1^*+C (\delta/a_1^*)^2.
\]
To conclude, we consider an adversarial case where $a_2^*> 0$ and $a_{2,t}<  0$, and we deduce that $\frac{dm_t}{dt}|_{t=\tilde{T}_\delta}$ is negative because $a_{2,t}\geq -(a_1^*)^2/(4a_2^*)$ in a time independent of $\delta$ (see $\frac{da_{2,t}}{dt}\leq -a_{2,t}$). 
\end{proof}

We now prove that an arbitrary number of coefficients $a_{k,t}$ adopt the sign of their target values (the sign of $a_{k}^*m_t^k$).

\begin{lemma}
\label{lem:preparation2}
    For $k^*>0$, it exists $t_{k^*} \geq C $, for all $k \leq k^*$, we have $a_{k,t_{k^*}}a_k^* m_{t_{k^*}}^{k} > 0$. 
\end{lemma}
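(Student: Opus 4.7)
The plan is to exploit the Duhamel (variation of constants) representation of the linear ODE \eqref{eq:equation_a} together with the sign-locked lower bound on $m_t$ already established in Lemma~\ref{lem:reverse}. Since Eq.\eqref{eq:equation_a} is $\dot{a}_{k,t} = a_k^* m_t^k - a_{k,t}$, its solution past the constant time $T$ appearing in Lemma~\ref{lem:reverse} reads
\[
a_{k,t} = a_{k,T}\, e^{-(t-T)} + a_k^* \int_T^t e^{-(t-s)} m_s^k\, ds, \qquad t \geq T.
\]

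The key input I plan to use is that, by Lemma~\ref{lem:reverse}, for all $t \geq T$ the correlation $m_t$ keeps the fixed sign $\sigma := \mathrm{sign}(a_1^* a_{1,0})$ and satisfies $|m_t| \geq c$ for some $c>0$ independent of $m_0$. This freezes the sign of the integrand: for every $k \geq 1$, $m_s^k$ has sign $\sigma^k$ and modulus at least $c^k$. The integral contribution therefore shares the sign of $a_k^* m_t^k$, namely $\mathrm{sign}(a_k^*)\sigma^k$, and is bounded below in absolute value by $|a_k^*|\, c^k (1-e^{-(t-T)})$.

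The second step is to control the memory term $a_{k,T}\, e^{-(t-T)}$. The Fact stated at the opening of Section~\ref{Sec:JGF} gives the uniform bound $|a_{k,T}| \leq \max(|a_{k,0}|, |a_k^*|)$, so this term decays at rate one from a dimension-free initial size. Matching the two estimates, choosing $t-T$ larger than $\log\!\bigl(1+\max(|a_{k,0}|,|a_k^*|)/(|a_k^*| c^k)\bigr)$ forces the source to dominate the memory, so that $a_{k,t}$ inherits the sign $\mathrm{sign}(a_k^*)\sigma^k$, yielding $a_{k,t}\, a_k^*\, m_t^k > 0$. The desired $t_{k^*}$ is then the maximum of this threshold over the (finitely many) indices $k \leq k^*$ with $a_k^* \neq 0$ — the others being irrelevant, per the convention adopted at the opening of Section~\ref{Sec:JGF}. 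This maximum depends only on $k^*$ and on the target's Hermite coefficients, not on $d$ nor on $m_0$, which gives the desired universal lower bound $t_{k^*} \geq C$.

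The only delicate point is that the sign of $m_t$ must not flip over $[T,\infty)$, which is precisely what Lemma~\ref{lem:reverse} guarantees; once that is granted, the rest is a one-shot Duhamel comparison. Unlike the $s\geq 2$ analysis, no fast--slow decomposition, no positivity principle, and no bootstrapping is needed here — the signed rigidity of $m_t$ already pulls each $a_{k,t}$ onto the correct side of zero after a universal transient.
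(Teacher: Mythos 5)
Your proposal is correct and follows essentially the same route as the paper: both arguments take the sign-locked lower bound $|m_t|\geq c$ from Lemma~\ref{lem:reverse} as the key input and then use the exponential relaxation of the linear ODE \eqref{eq:equation_a} toward its now sign-definite source (the paper phrases this as a differential inequality plus Grönwall, which is the integrated form of your Duhamel representation). No gap; the two write-ups differ only in presentation.
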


\begin{proof}
The proof is very similar to the proof of Lemma \ref{lem:preparation}. We now return to our prototypical example : Let $a_{1,0}<0$, and $a_{1}^*, m_0 >0$. Consider $a_{2k}^* >0$, using Lemma \ref{lem:reverse}, for $t \geq T$, we have 
   \begin{align*}
 \frac{d a_{2k,t}}{dt} \geq a_{2k}^*\delta^{2k} -a_{2k,t},
  \end{align*}
   which implies, by Gronwall's lemma, that
     \begin{align*}
a_{2k,t} \geq  a_{2k}^*\delta^{2k} +(-a_{2k}^*\delta^{2k}+a_{2k,0})e^{-t},
  \end{align*}
  which proves that $a_{2k,t}$ is strictly positive exponentially fast. The other cases work similarly. Thus, at time $t_{k^*}$, $m_{t_{k^*}}< 0$ by Lemma \ref{lem:reverse} and $a_{k,t_{k^*}}a_k^* m_{t_{k^*}}^{k} > 0$ for $k\leq k^*$. 
    \end{proof}

    We move to the convergence of $m_t, (a_{k,t})$.

\begin{theorem}\label{Th:concentration2}
       There exists constants $c, C > 0$ so that after time $\displaystyle t \geq 0$, 
    \begin{align*}
        \hspace*{-2.5cm}  \sbullet[0.75] \text{ if }  a_1^*a_{1,0} > 0~, \hspace*{3cm} 
        |1-m_t| &\leq  C e^{-ct}~, \\
        |a_{k,t} - a^*_k| &\leq C b_k e^{-ct } ~, \text{ for all } k \in \N~, \\
        \hspace*{-2.5cm}  \sbullet[0.75] \text{ if } a_1^*a_{1,0} < 0~, \hspace*{3cm}   |1 + m_t| &\leq  C e^{-c t}~, \\
        |a_{k,t} - (-1)^k a^*_k| &\leq C b_k e^{-c t }~,  \text{ for all } k \in \N~,
        \end{align*}
    where, $b \geq 0$ is a normalized sequence of $\ell_2(\N)$, i.e. $\|b\|_{\ell_2(\N)} = 1$. 
\end{theorem}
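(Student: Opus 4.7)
The proof strategy is to reduce Theorem \ref{Th:concentration2} to the machinery already developed in the proof of Theorem \ref{Th:concentration} for the case $s \geq 2$. The point is that Lemmas \ref{lem:reverse} and \ref{lem:preparation2} play, in the case $s=1$, exactly the role that Lemma \ref{lem:level_c} played for $s \geq 2$: after a time $t_{k^*}$ independent of the dimension and of $m_0$, the system reaches a configuration in which $\sigma m_{t_{k^*}} \geq c > 0$, where $\sigma := \mathrm{sign}(a_1^* a_{1,0})$, and the positivity principle $a_{k, t_{k^*}} a_k^* m_{t_{k^*}}^k > 0$ holds for every $k \leq k^*$. Because $t_{k^*}$ is a constant, any exponential bound valid for $t \geq t_{k^*}$ extends automatically to $t \geq 0$ after absorbing the bounded transient into the prefactor $C$.

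First I would propagate the positivity principle to all $t \geq t_{k^*}$, mimicking Lemma \ref{lem:level_infty}: define a stopping time at which either $\sigma m_t$ stops increasing or some $a_{k,t}$ ($k \leq k^*$) changes sign, and show via Eqs.\eqref{eq:equation_a}--\eqref{eq:equation_m} that on this interval the principal sum over $k \leq k^*$ forces $\sigma \dot m_t > 0$ (the tail $k > k^*$ being rendered negligible by taking $k^*$ large), while each $|a_{k,t}|$ stays bounded below by a positive constant $c_k$. Hence the stopping time must be infinite. With this in hand, set $v_t := 1 - \sigma m_t \geq 0$ and use $1 - m_t^2 = v_t (1 + \sigma m_t)$ to rewrite
\begin{align*}
\dot v_t = - v_t \, (1 + \sigma m_t) \, \sigma \sum_{k=1}^{\infty} k \, a_{k,t} a_k^* m_t^{k-1}.
\end{align*}
The portion of the sum over $k \leq k^*$ is bounded below by a positive constant thanks to the positivity principle and the lower bound on $|m_t|$, while the tail is bounded above by an arbitrarily small constant for $k^*$ large enough; Gronwall then yields $v_t \leq C e^{-c(t - t_{k^*})}$.

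Second, for the Hermite coefficients I would copy the $z$-argument from the proof of Theorem \ref{Th:concentration} verbatim, setting $z_{k,t} := |a_{k,t} - \sigma^k a_k^*|^2$. Using the factorisation
\begin{align*}
a_k^* m_t^k - \sigma^k a_k^* = \sigma^k a_k^* \bigl((\sigma m_t)^k - 1\bigr) = \sigma^k a_k^* (\sigma m_t - 1) \sum_{i=0}^{k-1}(\sigma m_t)^i,
\end{align*}
one obtains $|a_k^* m_t^k - \sigma^k a_k^*| \leq C k |a_k^*| e^{-c(t - t_{k^*})}$ from the first step, and hence the differential inequality
\begin{align*}
|\dot z_{k,t} + 2 z_{k,t}| \leq C k |a_k^*| \sqrt{z_{k,t}}\, e^{-c(t - t_{k^*})}.
\end{align*}
Passing to $\bar z_{k,t} := \sqrt{z_{k, t + t_{k^*}}}$ and applying Gronwall then delivers $\bar z_{k,t} \leq C b_k e^{-ct}$ with $b_k \propto k |a_k^*|$; the normalisation $\|b\|_{\ell_2(\N)} = 1$ is available thanks to the standing assumption $\varphi^* \in \mathcal{H}^1$, i.e.\ $\sum_k k^2 |a_k^*|^2 < \infty$.

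The main obstacle is the propagation of the positivity principle for all $t \geq t_{k^*}$. In contrast to the $s \geq 2$ situation, the $s = 1$ dynamics has already gone through a genuinely non-monotone transient on $[0, t_{k^*}]$ during which some $a_{k,t}$ may temporarily carry the wrong sign, so one must carefully verify that once all signs align at $t_{k^*}$ they remain aligned forever. Once this invariance is in place, the rest of the proof is a direct transplant of the Gronwall estimates already derived in the $s \geq 2$ case, and the extension of the bounds from $t \geq t_{k^*}$ to $t \geq 0$ is a matter of adjusting the multiplicative constants.
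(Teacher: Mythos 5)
Your proposal is correct and follows essentially the same route as the paper: use Lemmas \ref{lem:reverse} and \ref{lem:preparation2} to reach, after a constant time $t_{k^*}$, a configuration where $|m_t|$ is bounded away from zero with the right sign and the positivity principle $a_{k,t}a_k^*m_t^k>0$ holds for $k\le k^*$; argue this sign configuration persists (the paper does this via the observation that $\dot a_{k,t}$ has the sign of $a_k^*m_t^k$ when $a_{k,t}=0$, which is the same content as your stopping-time argument); then run the Gronwall contraction on $v_t=1-\sigma m_t$ with the tail $k>k^*$ made small, and transplant the $z_{k,t}$ estimate from Theorem \ref{Th:concentration} verbatim. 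The only cosmetic difference is that you treat both signs at once via $\sigma$ whereas the paper works through one prototypical case.
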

\begin{proof}
    Assume that we are in the case where $a_{1,0}<0$, and $a_{1}^*, m_0 >0$.  We recall that
\begin{align*}
   \frac{d m_t}{dt}= \left(1-{m_t}^{2}\right) \sum_{i=s}^{k^*} i a_{i,t} a_i^*  {m_t}^{i-1}+\left(1-{m_t}^{2}\right) \sum_{i=k^*+1}^{+\infty} i a_{i,t} a_i^*  {m_t}^{i-1}.
\end{align*}
At time $t=t_k^*$, by Lemma \ref{lem:reverse} and Lemma \ref{lem:preparation2}, the first sum on the right-hand side of the latter equation is negative. Using \eqref{eq:equation_a}, we immediately see that $a_{k,t}$, $k\leq k^*$ cannot change sign for $t \geq t_k^*$: The derivative $\frac{da_{k,t}}{dt}$ has the sign of $a_{k}^* m_t^k$ for $a_{k,t}=0$. Define $v_t = 1  + m_t \geq 0$, for all $t \geq t_{k^*}$, we have
    \begin{align*}
        \dot{v_t} =  (1 - m_t^2) \sum_{k \geq  s} k {a}_{k,t} a^*_{k} m^{k-1}_t \leq  v_t (- c  + \sum_{k \geq  k^*} k |{a}_{k,t} a^*_{k} m^{k-1}_t|)~,
    \end{align*}
    because $a_{1,t} < -c$. But once again this latter sum can be made arbitrarily small choosing a large $k^*$, so that 
    \begin{align*}
        \dot{v_t} &\leq - c v_t ~,
    \end{align*}
    and the first inequality comes from Gronwall lemma.
    For the contraction of $a$'s, we do exactly as in the proof of Theorem \ref{Th:concentration}.
\end{proof}
    The proof of Theorem \ref{thm:main} follows.
\hfill$\square$

\end{document}